\theoremstyle{plain}
\newtheorem{theorem}{Theorem}[section]
\newtheorem{proposition}[theorem]{Proposition}
\theoremstyle{definition}
\newtheorem{definition}[theorem]{Definition}
\newtheorem{assumption}[theorem]{Assumption}
\theoremstyle{remark}
\newcommand{\indep}{\perp \!\!\! \perp}
\newcommand{\p}[0]{\mathbb{P}}
\newcommand{\E}[0]{\mathbb{E}}
\newcommand{\eqd}[0]{\overset{\textup{d}}{=}}
\newcommand{\D}[0]{\mathcal{D}}
\newcommand{\ind}[0]{{\mathbbm{1}}}
\newcommand{\Span}{\textup{Span}}
\newcommand{\state}{X}
\newcommand{\statex}{x}
\newcommand{\finaloutput}{Y}
\newcommand{\statespace}{\mathcal{X}}
\newcommand{\llm}{\textup{LLM}}
\title{Understanding Chain-of-Thought in LLMs through Information Theory}
\author[*, 1]{Jean-Francois Ton}
\author[*, 1]{Muhammad Faaiz Taufiq}
\author[2]{Yang Liu}
\affiliation[1]{ByteDance Seed}
\affiliation[2]{University of California, Santa Cruz}
\abstract{
Large Language Models (LLMs) have shown impressive performance in complex reasoning tasks through the use of Chain-of-Thought (CoT) reasoning, allowing models to break down problems into manageable sub-tasks. However, existing CoT evaluation techniques either require annotated CoT data or fall short in accurately assessing intermediate reasoning steps, leading to high rates of false positives. In this paper, we formalize CoT reasoning in LLMs through an information-theoretic lens. Specifically, our framework quantifies the `information-gain' at each reasoning step, enabling the identification of failure modes in LLMs without the need for expensive annotated datasets. We demonstrate the efficacy of our approach through extensive experiments on toy arithmetic, GSM8K and PRM800k datasets, where it significantly outperforms existing outcome-based methods by providing more accurate insights into model performance on individual subtasks.
}
\begin{document}
\maketitle

\section{Introduction}
Large Language Models (LLMs) have demonstrated remarkable capabilities across a wide range of tasks, from complex reasoning to code generation \citep{chowdhery2024palm, openai2024gpt4technicalreport, bubeck2023sparksartificialgeneralintelligence, anil2023palm2technicalreport}. Many of these advances can be attributed to Chain-of-Thought (CoT) reasoning \citep{wei2024cot, nye2021workscratchpadsintermediatecomputation, li2024chainthoughtempowerstransformers}, which involves breaking down complex problems into a series of intermediate steps, mirroring human-like reasoning processes. The success of CoT reasoning, particularly in domains such as mathematics, logic, and multi-step decision-making, has led researchers to incorporate CoT-like features directly into model training, i.e. the FLAN family of models \citep{chung2022scalinginstructionfinetunedlanguagemodels, wei2022finetuned}.

This paper introduces a new formal framework for analyzing CoT in LLMs. We provide a rigorous method grounded in information theory, to evaluate the quality of each step in a model's reasoning process, thus offering insights beyond simple accuracy metrics to identify areas for improvement.

Previous work in this area has proposed ``\textit{Process Supervision}'' \citep{lightman2023lets}, which requires expensive, human-annotated step-by-step data. While effective, this approach is often impractical due to the high cost and effort of creating large-scale annotated datasets. In turn, alternative methods have recently been proposed, such as outcome reward modelling \citep{havrilla2024glore} or the Math-Shepherd \citep{wang2024mathshepherdverifyreinforcellms}. Both these approaches avoid reliance on costly annotated step-wise CoT data by instead modelling the correctness of each step based on the correctness of final outputs. However, as we show later, these methods can be unsound for detecting incorrect reasoning steps and can thus lead to a high false-positive rate in certain scenarios.

To address these shortcomings, we employ an information-theoretic approach, grounded in the following key insight: \textit{Each correct step in a reasoning process should provide valuable and relevant information that aids in predicting the final correct outcome}. Building on this insight, we develop a framework to quantify the ``\textit{information-gain}'' after each sub-task in the reasoning process, without the need for step-by-step annotations. This enables us to detect sub-tasks that fail to contribute meaningful information toward the correct solution, signalling potential errors or irrelevant steps in the model’s reasoning. In addition, we also introduce a practical algorithm to assess LLM performance across various sub-tasks within a Chain-of-Thought (CoT) reasoning process. The key contributions of this paper are as follows:
\begin{enumerate}
\item We develop a framework for sequential applications of sub-tasks, e.g. Chain-of-Thought and provide a rigorous language to identify failure modes in LLMs. 
\item Based on this framework, we propose a practical algorithm to assess the task-wise performance of models. This yields more granular information about a model's CoT performance without requiring annotated data for intermediate reasoning steps. 
\item We validate our methods on extensive toy data, the GSM8K \citep{cobbe2021trainingverifierssolvemath} as well as the PRM800K \cite{lightman2023lets} dataset. Our method effectively identifies failure modes in CoT reasoning, unlike baselines like outcome reward modelling \citep{havrilla2024glore} and Math-Shepherd \citep{wang2024mathshepherdverifyreinforcellms}, which rely on final accuracy and tend to increase false positives in error detection.
\end{enumerate}

\section{Proposed Framework: Setup and Notation}
Before diving into our framework, we first provide a high-level overview and notation on how LLM generation will be treated throughout this paper. This will allow us to set the foundation for describing our information-theoretic framework. In particular, following the approach in \cite{gonzalez2023beyond}, we view LLMs as abstract execution machines with a natural language interface. From this perspective, prompts are designed to solve specific problems (e.g., mathematical or logical problems), and the LLM processes the information in the prompt to generate an output.

We define a typical prompt as a combination of two parts:
\begin{enumerate}
    \item An initial state, represented by the random variable \( \state_0 \in \statespace \), encapsulates the prompt-provided information that the LLM processes to derive the queried result.

    \item A task \( \lambda \in \Upsilon \) (e.g., addition then multiplication) defines how the LLM processes \( \state_0 \).

\end{enumerate}
Given the prompt, defined as a tuple $(\state_0, \lambda)$, the state $\state_1$ represents the result of applying task $\lambda$ to the initial state $\state_0$. Formally, we denote this using the \emph{update} mapping $\Lambda: \statespace \times \Upsilon \rightarrow \statespace$ which outputs the updated state $\state_1$ by applying the task $\lambda$ on $\state_0$,
i.e. $\state_1 = \Lambda(\state_0, \lambda)$. 
This updated state is then used to obtain the final output, denoted by $\finaloutput \in \statespace$, by extracting only the information in $\state_1$ which is relevant to the queried final answer. This notation defines a prompt that instructs a model to process information drawn from some initial distribution $p(\state_0)$ (e.g., math problems). 

We use a simple example 
to illustrate this notation:
\begin{align}
    \textbf{Prompt: } \textit{``Solve for $z = 2\,( u + v)$, $u = 12$, $v = 13$''} \label{eq:composition_prompt}
\end{align}
Here, the initial state $\statex_0$ denotes the information \textit{``$u$ = 12, $v$ = 13''}, and $\lambda$ denotes the task of finding $z$ (i.e. addition followed by multiplication). Next, $\statex_1 = \Lambda(\statex_0, \lambda)$ represents the updated information after correctly performing the addition operation, i.e. $\statex_1$ is \textit{``$u$ = 12, $v$ = 13 and $z$ = 50''}. The final output, $y$, is then obtained by simply extracting the value of $z$ from $\statex_1$, i.e. \textit{``$z$ = 50''}.

\textbf{Remark }
Our setup also encapsulates cases with ambiguous (or multiple correct) responses for a given task $\lambda$. In this case, $\Lambda(\statex_0, \lambda)$ is a random variable with distribution $p(\state_1 \mid \state_0 = \statex_0)$. Therefore, for generality, we treat $\Lambda(\statex_0, \lambda)$ as a random variable from now on.

\subsection{Compositionality}
Many mathematical or logical problems, such as the one in \eqref{eq:composition_prompt}, require sequential application of operations. Our notation is also amenable to such problems as it accommodates the composition of tasks.

For example, one way to address prompt \eqref{eq:composition_prompt} involves first adding $u$ and $v$, and next, multiplying the result by $2$ to find $z$. Using our notation, this can be expressed as $\Lambda(\statex_0, \lambda_1 \circ \lambda_2)$, where $\lambda_1, \lambda_2$ denote the addition and multiplication tasks respectively. The following property allows us to define the application of compositional task $\lambda_1 \circ \lambda_2$:
\begin{definition}
    We say that an update rule $\Lambda: \statespace \times \Upsilon \rightarrow \statespace$ is \emph{compositionally consistent} if, for all $\statex_0 \in \statespace$ and $\lambda_1, \lambda_2 \in \Upsilon$ we have that
    $\Lambda(\statex_0, \lambda_1 \circ \lambda_2) \eqd \Lambda(\Lambda(\statex_0, \lambda_1), \lambda_2)$.
\end{definition}
Here, $\eqd$ denotes equality in distribution and is sufficient in cases where a query may have multiple correct responses.

Returning to the prompt in \eqref{eq:composition_prompt}, Figure \ref{fig:compositionality} shows that the model first computes $u + v$, then multiplies the result by $2$.
\begin{figure}[t]
    \centering
    \includegraphics[width=0.65\linewidth]{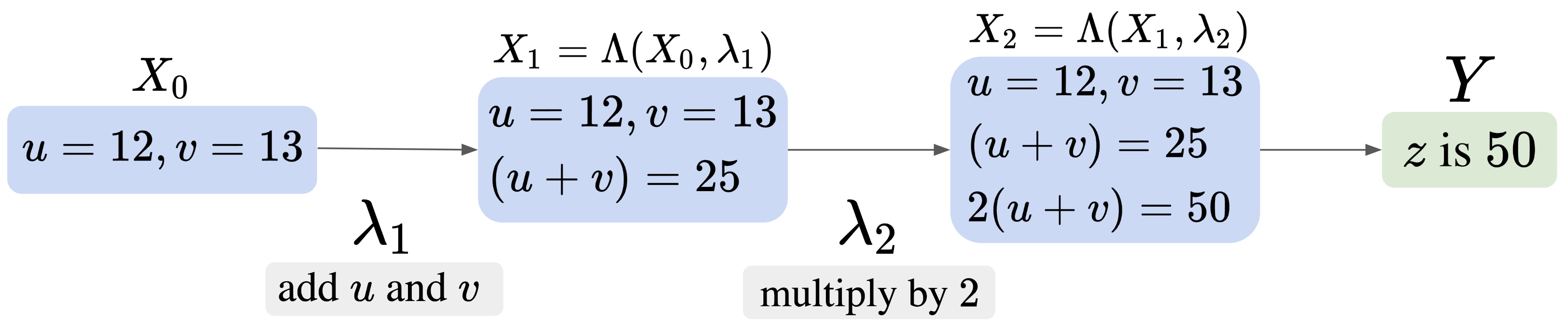}
    \caption{Prompt \eqref{eq:composition_prompt} requires compositional application of tasks.}
    \label{fig:compositionality}
\end{figure}
Here, we refer to $\state_1, \state_2$ as \emph{intermediate} states and $Y$ is the correct final output.
In general, if a problem statement requires sequential application of $T$ sub-tasks, $\lambda = \lambda_1 \circ \ldots \circ \lambda_T$, then the Chain-of-Thought (CoT) 
reasoning is divided up into $T$ steps, where the $t$'th step is recursively defined as $\state_t = \Lambda(\state_{t-1}, \lambda_t)$ for $t\in \{1, \dots, T\}$. Finally, the overall true output $\finaloutput$ is obtained by extracting the queried information from the final state $\state_T$.

Having established a formal language for the sequential application of tasks, we now turn towards how a task may be divided into such a sequence of intermediate sub-tasks.

\subsection{Primitive Tasks}
In this subsection, we introduce the notion of \emph{primitive tasks} which form the basic building blocks of any task. Intuitively, our formulation is reminiscent of ideas from linear algebra, where basis vectors form the basic building blocks of a vector space. In our case, any task $\lambda \in \Upsilon$ can be expressed as a sequence of primitive tasks. 
This decomposition will allow us to establish which tasks the model could have learned from the training data. For example, if a specific primitive task is not available in the LLM training data, it would be impossible for the model to execute any instructions which involve this primitive task correctly.
With this in mind, we now introduce this concept formally:

\begin{definition}[Primitive tasks]
    We say that a set of tasks $\Gamma \subseteq \Upsilon$ is primitive if, for any task $\lambda \in \Upsilon$, there exists a unique subset $\{\lambda_{i}\}_{i=1}^k \subseteq \Gamma$ such that $ \lambda = \lambda_{1} \circ \dots \circ \lambda_{k}$. 
\end{definition}

Note that the decomposition is not unique but the set of components is. 
In some cases, there may exist distinct permutations of primitive tasks which compose to yield the same task as is common in many associative operations. 
As an example, in the context of mathematical problem-solving, the basic arithmetic operations could be considered primitive.
The composition of these primitive tasks allows us to construct extremely complex operations. 
Just like in linear algebra, we define the span of these tasks as the set obtained by their sequential applications.

\begin{definition}[Span of tasks]
    Let $\Phi \subseteq \Upsilon$ be a set of tasks:
    \begin{align*}
      \Span(\Phi) =  \{\lambda_1 \circ \ldots \circ \lambda_k: \lambda_i \in \Phi \textup{ for } 1\leq i \leq k, k \in \mathbb{Z}_{> 0}\}.
    \end{align*}
\end{definition} 
The set $\Span(\Phi)$ comprises all the tasks that can be applied by composing sub-tasks in the set $\Phi$. This means that any \emph{compositionally consistent} update rule $\Lambda$ which is well-defined on the set of tasks $\Phi$ will also be well-defined on $\Span(\Phi)$. However, this $\Lambda$ may still be ill-defined for any task not in this span. This limitation, known as unidentifiability, defines the boundaries of a model's inferences.

\subsection{Unidentifiability}
The unidentifiability of tasks forms a key part of our framework. It directly addresses the fundamental challenge that models, such as LLMs, face when dealing with unseen tasks. If a task $\lambda$ lies outside of $\Span(\Phi)$, the span of tasks the model has been trained on, then the model cannot be expected to infer or apply it correctly. In other words, the model's capacity is constrained by the identifiability of tasks within the training set. This notion and formalization of unidentifiability allows us to highlight a critical limitation in the generalization of models: tasks not encountered during training cannot be reliably executed, as they remain beyond the model's learned task-span. More formally:
\begin{definition}[Unidentifiability]
    A task \( \lambda \) is unidentifiable in a set \( \Phi \subseteq \Upsilon \) if and only if \( \lambda \not\in \Span(\Phi) \).
\end{definition}

\textbf{Remark}\,\, 
In practice, unidentifiability may depend on the initial state $\state_0$, i.e. an LLM might accurately perform addition for 2-digit numbers but fail with 10-digit numbers \citep{razeghi2022impactpretrainingtermfrequencies}. For more details, see Appendix \ref{app:state-conditioned-unidentifiability}.

Building on this framework, we propose an algorithm that integrates unidentifiability with information-theoretic methods to detect CoT reasoning failures.

\section{Operationalising Our Framework}\label{sec:identifiability-in-llms}

This section aims to operationalise the above framework to make inferences regarding the unidentifiability of intermediate sub-tasks in a model's CoT reasoning process. This would subsequently allow us to detect any sub-task at which a model's CoT reasoning process starts to diverge from the ground truth, thereby providing insights into how the model can be improved. For example, suppose we are in a setting where the ``addition'' operation is unidentifiable, then we could further improve the model's mathematical reasoning by fine-tuning it on the addition operation.
\subsection{An information-theoretic perspective}
To apply unidentifiability in CoT generations, we introduce a fundamental assumption: each correct CoT step should provide meaningful information aiding the prediction of \( \finaloutput \). If a step ceases to increase information about \( \finaloutput \), it indicates an incorrect execution. We formalize this assumption using our notation from the previous section:

\begin{assumption}[Bayesian network] \label{ass:bayenet}
    Let $\lambda \neq \lambda'$ be two operations with primitive decompositions:
    \begin{align*}        
    \lambda &= \lambda_1 \circ \dots \lambda_{k-1} \circ \lambda_{k} \circ \dots \circ \lambda_T \quad \text{ and } \\
    \lambda' &= \lambda_1 \circ \dots \lambda_{k-1} \circ \lambda'_{k} \circ \dots \circ \lambda'_{T'},
    \end{align*}
    where 
    $\lambda'_{k}$ is unidentifiable in $\{\lambda_1,\dots, \lambda_T \}$. Then, the intermediate states corresponding to the tasks $\lambda, \lambda'$ have the Bayesian network in Figure \ref{fig:bayes-net}.
\end{assumption}
    \begin{figure}[ht]
        \centering
        \resizebox{0.55\textwidth}{!}{%
        % % File: diagram.tikz

% \begin{tikzpicture}[
%     node distance=0.75cm,
%     state/.style={draw, circle, inner sep=1.5pt, minimum size=1cm, thick}, % Style only for sigma nodes
%     every edge/.style={draw, -{Stealth[scale=0.5]}, thick}
% ]

% % Nodes (sigma)
% \node[state] (sigma0) {$\state_0$};
% \node (dots) [right=of sigma0] {$\dots$};
% \node[state] (sigmak) [right=of dots] {$\state_{k-1}$};
% \node[state] (sigmak1) [above right=0.05cm and 0.65cm of sigmak] {$\state_{k}$};
% \node[state] (sigmak1prime) [below right=0.05cm and 0.65cm of sigmak] {$\state'_{k}$};
% \node (dots1) [right=of sigmak1] {$\dots$};
% \node (dots2) [right=of sigmak1prime] {$\dots$};
% \node[state] (final) [right=of dots1] {$\finaloutput$};
% \node[state] (finalprime) [right=of dots2] {$\finaloutput'$};

% % Edges with lambda labels (not circled)
% \draw[->] (sigma0) -- node[above] {$\lambda_1$} (dots);

% \draw[->] (dots) -- node[above] {$\lambda_{k-1}$} (sigmak);
% \draw[->] (sigmak) -- node[above left] {$\lambda_{k}$} (sigmak1);
% \draw[->] (sigmak) -- node[below left] {$\lambda'_{k}$} (sigmak1prime);
% \draw[->] (sigmak1) -- node[above] {$\lambda_{k+1}$} (dots1);
% \draw[->] (sigmak1prime) -- node[below] {$\lambda'_{k+1}$} (dots2);
% \draw[->] (sigmak1) -- (dots1);
% \draw[->] (sigmak1prime) -- (dots2);
% \draw[->] (dots1) -- node[above] {$\lambda_{T}$} (final);
% \draw[->] (dots2) -- node[below] {$\lambda'_{T'}$} (finalprime);
% \end{tikzpicture}

%%%%%%%%%%%%%%%%%%%%%%%%%%%%%%%%%%

% File: diagram.tikz
\begin{tikzpicture}[
    scale=0.6,
    transform shape,
    node distance=0.75cm,
    state/.style={draw=none, circle, inner sep=1.5pt, minimum size=1cm, thick}, % Style only for sigma nodes
    every edge/.style={draw, -{Stealth[scale=0.5]}, thick}
]

% Nodes (sigma)
\node[state, fill=green!40] (sigma0) {$\state_0$};
\node (dots) [right=of sigma0] {$\dots$};
\node[state, fill=green!40] (sigmak) [right=of dots] {$\state_{k-1}$};
\node[state, fill=green!40] (sigmak1) [above right=0.05cm and 0.65cm of sigmak] {$\state_{k}$};
\node[state, fill=red!40] (sigmak1prime) [below right=0.05cm and 0.65cm of sigmak] {$\state'_{k}$};
\node (dots1) [right=of sigmak1] {$\dots$};
\node (dots2) [right=of sigmak1prime] {$\dots$};
\node[state, fill=green!40] (final) [right=of dots1] {$\finaloutput$};
\node[state, fill=red!40] (finalprime) [right=of dots2] {$\finaloutput'$};

% Edges with lambda labels (not circled)
\draw[->] (sigma0) -- node[above] {$\lambda_1$} (dots);
\draw[->] (dots) -- node[above] {$\lambda_{k-1}$} (sigmak);
\draw[->] (sigmak) -- node[above left] {$\lambda_{k}$} (sigmak1);
\draw[->] (sigmak) -- node[below left] {$\lambda'_{k}$} (sigmak1prime);
\draw[->] (sigmak1) -- node[above] {$\lambda_{k+1}$} (dots1);
\draw[->] (sigmak1prime) -- node[below] {$\lambda'_{k+1}$} (dots2);
\draw[->] (sigmak1) -- (dots1);
\draw[->] (sigmak1prime) -- (dots2);
\draw[->] (dots1) -- node[above] {$\lambda_{T}$} (final);
\draw[->] (dots2) -- node[below] {$\lambda'_{T'}$} (finalprime);

\end{tikzpicture} %
        }
        \caption{Bayesian network corresponding to Assumption \ref{ass:bayenet}.}\label{fig:bayes-net}
    \end{figure}
\paragraph{\textbf{Intuition}}
The Bayesian network in Figure \ref{fig:bayes-net} implies that if we encounter an unidentifiable task ($\lambda'_{k}$) at step $k$ of the reasoning path,
the future states $\state_i$ and $\state'_{j}$ for any $i,j\geq k$ satisfy the conditional independence $\state_i \indep \state'_{j} \mid \state_{k-1}$.
Consequently, once we apply $\lambda'_k$, the subsequent states along the new reasoning path (in red) add no information regarding the subsequent states or the output of the original path (in green).
Hence the figure represents the fact that, for any given input, the output of $\lambda_{k}$ (top fork) contains no information regarding the output of any other primitive task $\lambda'_{k}$ (bottom fork). 

With our key 
assumption on the ground-truth CoT process formalized, we now consider the model's CoT behaviour.

\subsection{Task execution in LLMs}
To operationalise our framework, we formally distinguish between the model i.e. LLM's task execution and the \emph{ground truth} process which arises from following the instructions correctly. To this end, we explicitly define how an LLM interprets a specified task $\lambda$ using $\Lambda^M(\state_0, \lambda)$, which is in general distinct from the \emph{ground truth} update rule $\Lambda(\state_0, \lambda)$. 

Here, one option would be to consider the idealised setting where the model learns to perfectly follow some of the primitive tasks available in the training data. 
However, this may be considered too restrictive since in reality most LLMs do not always follow a ``learned'' task perfectly. 
Instead, we consider a much weaker assumption that the model cannot correctly execute a task which is unidentifiable in the training data.
Concretely, suppose $\Gamma^M \subseteq \Gamma$ denotes the primitive tasks available in the LLM training data. Then, we make the following assumption on LLM's task execution.

\begin{assumption}[Task execution in LLMs]\label{ass:model}
$\Lambda^M$ is compositionally consistent and for any $(\statex_0, \lambda) \in \statespace \times \Upsilon$, there exists some $\widehat{\lambda}\in \Span(\Gamma^M)$ such that $\Lambda^M(\statex_0, \lambda) \eqd \Lambda(\statex_0, \widehat{\lambda})$. 

\end{assumption}

\paragraph{\textbf{Intuition}} Assumption \ref{ass:model} means that for any task which we would like the LLM to apply, the LLM ends up executing some task in $\Span(\Gamma^M)$ which the model has been trained on. 
In other words, the model's execution is restricted only to the tasks which could be inferred from the training data (i.e. in $\Span(\Gamma^M)$).
Moreover, this assumption also allows us to encapsulate cases where the model does not follow correct instructions or does not decompose a task correctly.

Before proceeding further with our main result which will allow us to test for the unidentifiability of sub-tasks, we define some notation which we will use from now onwards. 
Let $\lambda = \lambda_1 \circ \ldots \circ \lambda_T$ denote a primitive decomposition of a task $\lambda$. Then, starting from an initial state $\state_0$, we denote the model's intermediate states recursively as:
\begin{align*}
    \state^M_{t} \coloneqq \Lambda^M(\state^M_{t-1}, \lambda_t) \quad \textup{and} \quad \state^M_0 = \state_0.
\end{align*}
Moreover, we use $\finaloutput^M$ to denote the model's final output.
Next, using this notation, we present the conditional independence which must hold if the model encounters an unidentifiable intermediate task along its reasoning path. 

\begin{theorem}\label{prop:conditional_indep}
    Let $\Gamma^M \subseteq \Gamma$ denote the primitive tasks available in the training data. 
    Let $\lambda$ be a task with decomposition $\lambda = \lambda_1 \circ \ldots \circ \lambda_T$. If $\lambda_k$ is the first task in the decomposition of $\lambda$ which is unidentifiable in $\Gamma^M$ (i.e. $k = \arg\min_{t} \{\lambda_t \not \in \Span(\Gamma^M)\}$). Then, under Assumptions \ref{ass:bayenet} and \ref{ass:model}, we have that
    \begin{align}
        \finaloutput \indep \state^M_{j} \mid \state^M_{j-1} \quad \textup{for all $j\geq k$}. \label{eq:conditional_indep}
    \end{align}
\end{theorem}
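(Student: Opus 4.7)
The plan is to translate the model's reasoning trace into an equivalent ground-truth trace over identifiable primitives, then read off the desired conditional independence from the Bayesian network of Assumption \ref{ass:bayenet} via d-separation.

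First, I would apply Assumption \ref{ass:model} to each prefix task $\lambda_1 \circ \dots \circ \lambda_t$: using compositional consistency of $\Lambda^M$, this yields, for every $t \leq T$, some $\widehat{\lambda}^{(t)} \in \Span(\Gamma^M)$ with $\state^M_t \eqd \Lambda(\state_0, \widehat{\lambda}^{(t)})$. Expanding into primitives gives $\widehat{\lambda}^{(t)} = \nu^{(t)}_1 \circ \dots \circ \nu^{(t)}_{r_t}$ with every factor in $\Gamma^M$. Reapplying Assumption \ref{ass:model} to the single-step update $\Lambda^M(\state^M_{j-1}, \lambda_j)$ and using compositional consistency of the ground-truth $\Lambda$, I can take $\widehat{\lambda}^{(j)} = \widehat{\lambda}^{(j-1)} \circ \widehat{\mu}_j$ for some $\widehat{\mu}_j \in \Span(\Gamma^M)$, so that $\state^M_{j-1}$ sits as an intermediate state on the same ground-truth chain that terminates at $\state^M_j$.

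Next, for $j \geq k$, I would compare the primitive sequence $(\nu^{(j)}_i)_i$ with $(\lambda_i)_i$. Since $\lambda_k \notin \Span(\Gamma^M)$ while every $\nu^{(j)}_i \in \Gamma^M$, the sequences must disagree at a first index $k^* \leq k$, and by the uniqueness clause in the definition of primitive tasks, any primitive in $\Gamma^M \setminus \{\lambda_1, \dots, \lambda_T\}$ is automatically unidentifiable in $\{\lambda_1, \dots, \lambda_T\}$. This puts us in the exact setting of Assumption \ref{ass:bayenet} applied to the pair $(\lambda, \widehat{\lambda}^{(j)})$: the two decompositions share the prefix up to step $k^* - 1$ and then fork with an unidentifiable primitive at step $k^*$. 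The conclusion $\finaloutput \indep \state^M_j \mid \state^M_{j-1}$ then follows by d-separation, because in the tree-structured network of Figure \ref{fig:bayes-net} the unique undirected trail from $\finaloutput$ to $\state^M_j$ must descend to the common ancestor $\state_{k^*-1}$ and then climb the model branch through $\state^M_{k^*}, \dots, \state^M_{j-1}$, and conditioning on $\state^M_{j-1}$ blocks this trail.

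The hard part will be the hand-off between Assumption \ref{ass:model}, which only certifies membership of $\nu^{(j)}_{k^*}$ in $\Gamma^M$, and Assumption \ref{ass:bayenet}, which requires unidentifiability in the ground-truth primitives $\{\lambda_1, \dots, \lambda_T\}$. The edge case to rule out is that $\nu^{(j)}_{k^*}$ accidentally coincides with some $\lambda_i$ for $i \neq k^*$; this is precisely where the uniqueness of primitive decompositions must be invoked carefully, possibly under a mild additional assumption that $\Gamma^M$ does not recycle ground-truth primitives in a permuted order along the model's trace. A secondary technical point is that the primitive decomposition of $\widehat{\lambda}^{(j)}$ should be shown to extend that of $\widehat{\lambda}^{(j-1)}$, so that $\state^M_{j-1}$ actually appears as a named node on the model branch of the Bayesian network rather than only as a coarse aggregation of it.
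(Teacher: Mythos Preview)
Your proposal is correct and follows essentially the same route as the paper: use Assumption~\ref{ass:model} to realize the model's trace as a ground-truth trace along primitives in $\Span(\Gamma^M)$, observe that this realized decomposition must diverge from $(\lambda_1,\dots,\lambda_T)$ no later than step $k$ because $\lambda_k\notin\Span(\Gamma^M)$, and then invoke the Bayesian network of Assumption~\ref{ass:bayenet} to read off $\finaloutput \indep \state^M_j \mid \state^M_{j-1}$ by d-separation. The paper's proof is terser---it applies Assumption~\ref{ass:model} once to the full task and asserts $\state^M_j \eqd \Lambda(\state_0,\tilde{\lambda}_1\circ\dots\circ\tilde{\lambda}_j)$ directly rather than building the nested prefixes $\widehat{\lambda}^{(j)}=\widehat{\lambda}^{(j-1)}\circ\widehat{\mu}_j$---and it does not explicitly address the two edge cases you flag (the unidentifiability direction and the nesting of $\state^M_{j-1}$ as a named node); so your write-up is, if anything, more careful than the paper's own argument on those points.
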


Theorem \ref{prop:conditional_indep} shows that under Assumptions \ref{ass:bayenet} and \ref{ass:model}, when the model encounters an unidentifiable task (i.e. $\lambda_k$ in Theorem \ref{prop:conditional_indep}) in its Chain-of-Thought reasoning, the model output satisfies the conditional independence in Equation \eqref{eq:conditional_indep}. 
In practice, this means that if at step $k$, a model encounters a reasoning step which is \emph{necessary} for obtaining the correct answer and is \emph{unidentifiable} in the training data, the CoT reasoning diverges from the ground truth at this step and every subsequent step adds no additional information regarding the correct final output Y.
This `information' can be measured by checking if the model's confidence about the final output $Y$ increases after each step. This is formalised in the next section.

\subsection{Testing for unidentifiability using information-gain}\label{sec:conditional_independence_testing}
With our framework established, we now describe how to detect unidentifiable sub-tasks using information theory. Following common practice \citep{wang2024mathshepherdverifyreinforcellms, havrilla2024glore}, we assume access to a dataset of prompts and correct final answers, derived by applying task \( \lambda \), denoted as \( \D_\lambda \coloneqq \{(\statex_0^{i}, y^i)\}_{i=1}^n \). Recall that $\state^M_{j}$ and $\state^M_{j-1}$ represent the model's chain of thought (CoT) reasoning at steps $j$ and $j-1$, respectively. Consequently, each element in the conditional independence statement in Equation \eqref{eq:conditional_indep} can be derived from the data and/or the model. 

To this end, we consider the mutual information between $Y$ and $\state^M_{j}$ conditional on $\state^M_{j-1}$, denoted by $\mathcal{I}(\finaloutput; \state^M_j \mid \state^M_{j-1})$.  This conditional mutual information term intuitively represents the \emph{additional information} contributed by the $j$'th step of CoT, which is relevant for predicting the ground truth final output $Y$. 
Therefore, we refer to $\mathcal{I}(\finaloutput; \state^M_j \mid \state^M_{j-1})$ as the \emph{information-gain} at step $j$.

It follows from Theorem \ref{prop:conditional_indep} that if an LLM encounters a sub-task at step $i$ which is unidentifiable in its training data, no subsequent step should contribute any additional information relevant for predicting $Y$ (i.e. the information-gain 
should remain 0 after step $i$). If, on the other hand, we observe that $\mathcal{I}\left(\finaloutput; \state^M_j \mid \state^M_{j-1}\right) > 0$ for some $j \geq i$, then under Assumptions \ref{ass:bayenet} and \ref{ass:model}, the task $\lambda_i$ is not unidentifiable. 
To estimate the information-gain in practice, we use the following result:

\begin{proposition}
\label{thm:conditional_mi}
    Let $\mathcal{I}(X;Y \mid Z)$ denote the mutual information between $X$ and $Y$ conditional on $Z$. Then, 
    \begin{align}
        &\E[\log p(\finaloutput \mid \state^M_j)] - \E[\log p(\finaloutput \mid \state^M_{j-1})] 
        = \mathcal{I}\left(\finaloutput; \state^M_j \mid \state^M_{j-1}\right) \geq 0. 
        \label{eq:information-gain}
    \end{align}
\end{proposition}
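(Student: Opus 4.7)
The plan is to expand $\mathcal{I}(Y; \state^M_j \mid \state^M_{j-1})$ via its standard definition and then collapse $p(\finaloutput \mid \state^M_j, \state^M_{j-1})$ to $p(\finaloutput \mid \state^M_j)$ using the Markov structure of the CoT trajectory. Concretely, I would start from
\[
\mathcal{I}(\finaloutput; \state^M_j \mid \state^M_{j-1}) = \E\!\left[\log \frac{p(\finaloutput \mid \state^M_j, \state^M_{j-1})}{p(\finaloutput \mid \state^M_{j-1})}\right],
\]
which follows immediately by taking the outer expectation (over $\state^M_{j-1}$) of the conditional KL divergence between $p(\finaloutput, \state^M_j \mid \state^M_{j-1})$ and the product $p(\finaloutput \mid \state^M_{j-1})\,p(\state^M_j \mid \state^M_{j-1})$.

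The main obstacle is justifying the conditional independence $\finaloutput \indep \state^M_{j-1} \mid \state^M_j$, since only this allows me to replace $p(\finaloutput \mid \state^M_j, \state^M_{j-1})$ by $p(\finaloutput \mid \state^M_j)$. The natural argument leverages the fact that the CoT update rule is state-augmenting rather than state-overwriting: the prompt used to generate step $j$ literally contains the full reasoning trace through step $j-1$, so $\state^M_{j-1}$ is a deterministic (indeed measurable) function of $\state^M_j$. Once $\state^M_j$ is conditioned on, $\state^M_{j-1}$ is fixed and therefore can carry no additional information about $\finaloutput$. I would flag this as the only genuinely non-trivial structural input and point out that it mirrors the Markov property already imposed on the ground-truth chain in Assumption \ref{ass:bayenet}.

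With this conditional independence in hand, substituting $p(\finaloutput \mid \state^M_j, \state^M_{j-1}) = p(\finaloutput \mid \state^M_j)$ into the display above and splitting the logarithm yields
\[
\mathcal{I}(\finaloutput; \state^M_j \mid \state^M_{j-1}) = \E[\log p(\finaloutput \mid \state^M_j)] - \E[\log p(\finaloutput \mid \state^M_{j-1})],
\]
which is the claimed equality. For the inequality $\mathcal{I}(\finaloutput; \state^M_j \mid \state^M_{j-1}) \geq 0$, I would invoke the standard non-negativity of conditional mutual information: it is an expectation over $\state^M_{j-1}$ of a KL divergence, and KL divergences are non-negative by Jensen's inequality applied to the convex function $-\log$. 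This closes the proof and simultaneously delivers the intended interpretation: each CoT step can only increase the expected log-likelihood of the correct final output, and strictly so whenever it carries genuine signal about $\finaloutput$.
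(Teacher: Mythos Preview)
Your proposal is correct and follows essentially the same approach as the paper: both arguments hinge on the conditional independence $\finaloutput \indep \state^M_{j-1} \mid \state^M_j$, justified by the fact that $\state^M_{j-1}$ is a deterministic function of $\state^M_j$ (the CoT trace is cumulative), and then identify the resulting log-ratio expectation with the conditional mutual information. The only cosmetic difference is direction---the paper starts from the difference of expected log-likelihoods and manipulates toward $\mathcal{I}$, whereas you start from the definition of $\mathcal{I}$ and collapse toward the difference---and you are slightly more explicit about the non-negativity via the KL/Jensen argument, which the paper leaves implicit.
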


To estimate the information-gain in \eqref{eq:information-gain} using Proposition \ref{thm:conditional_mi}, we train a separate LLM, which we refer to as the \emph{supervisor model} $g_{\textup{sup}}$. This model takes as input the model's CoT reasoning up to any given intermediate step $t$, $\state^M_t$, and is fine-tuned to directly predict the ground truth final output $Y$. 
In this way $g_{\textup{sup}}(\state^M_t)$ approximates the conditional distribution $p(\finaloutput \mid \state^M_t)$. Then, the quantity $\E[\log p(\finaloutput \mid \state^M_j)]$ can be estimated using the negative cross-entropy loss for predicting $\finaloutput$, i.e., $\E[\log p(\finaloutput \mid \state^M_j)]$ is approximately 
\begin{align}
     \E[\log \hat{p}(\finaloutput \mid \state^M_j)] 
     = -\E[l_{\textup{CE}}\left(\finaloutput, g_{\textup{sup}}(\state^M_j)\right)], \nonumber
\end{align}
where $l_{\textup{CE}}$ denotes the cross-entropy loss.
From this, it follows that 
\begin{align}
    &\underbrace{\E[\log p(\finaloutput \mid \state^M_j)] - \E[\log p(\finaloutput \mid \state^M_{j-1})]}_{\text{Information-gain}} \approx \E[l_{\textup{CE}}(\finaloutput, g_{\textup{sup}}(\state^M_{j-1}))] - \E[l_{\textup{CE}}(\finaloutput, g_{\textup{sup}}(\state^M_j))]. \label{eq:ce_difference}
\end{align}
\textbf{Summary \,\,} The \emph{information-gain} (IG) between steps $j$ and $j-1$ reflects how much relevant information step $j$ contributes towards predicting \( \finaloutput \). If task \( \lambda_j \) is executed correctly, this gain is positive, as indicated by a decrease in the cross-entropy loss. Conversely, if step \( j \) does not provide additional information, the loss remains unchanged. This can be interpreted as the conditional mutual information between \( \state^M_j \) and \( \finaloutput \), conditioned on \( \state^M_{j-1} \). Positive information-gain suggests step \( j \) adds new insight about \( \finaloutput \), while no gain indicates no added information. Training details for the supervisor model are in Appendix \ref{sec:training-supervisor-model}.

\textbf{Remark on sample-wise information-gain \,\,\,}
While conditional mutual information provides an aggregate measure of information-gain for a sub-task in a dataset, it may also be desirable to obtain an analogous measure of sub-task correctness for individual CoT instances. This could be useful, for example, in detecting which step is wrong for a given prompt. 
Our notion of information-gain can be extended to this sample-wise setting, similar to \cite{ethayarajh2022understanding}, by instead considering the following difference
\begin{align}
    &\log p(\finaloutput \mid \state^M_j) - \log p(\finaloutput \mid \state^M_{j-1}) \approx l_{\textup{CE}}(\finaloutput, g_{\textup{sup}}(\state^M_{j-1})) - l_{\textup{CE}}(\finaloutput, g_{\textup{sup}}(\state^M_j)). \label{eq:ce_difference_samplewise}
\end{align}
Intuitively, if step $j$ in the model's CoT is correct, the model should become more confident in the ground truth output $Y$ being the correct final answer. Therefore, the difference above should be positive. Conversely, if step $j$ is wrong, the model's confidence regarding $Y$ should not increase, and this difference should be $\leq 0$. From now on, we refer to the difference in \eqref{eq:ce_difference_samplewise} as \emph{sample-wise information-gain} at step $j$.

\textbf{Remark on O1/R1 style reasoning \,\,\,}
Although we present our framework using linear chains-of-thought for clarity, the information-gain metric naturally accommodates the more complex reasoning patterns found in O1/R1-style models. These reasoning models often explore multiple solution paths, backtrack when encountering errors, and dynamically self-correct their approach. In such exploratory settings, our framework remains effective: steps along incorrect reasoning trajectories will exhibit low or negative information gain, indicating they do not contribute meaningfully toward the correct final answer. When the model identifies a more promising path and self-corrects, subsequent steps will show positive information gain, signaling productive progress. 

This adaptability to varied reasoning structures is empirically demonstrated in our experiments (Section \ref{sec:experiments}), where we analyse reasoning traces from the MATH/PRM dataset. Steps that human annotators labelled as uninformative or irrelevant consistently show low information gain under our metric, while correct and meaningful steps exhibit high information gain. Thus, our method provides reliable step-wise evaluation regardless of whether the reasoning follows a linear chain or involves branched/exploratory patterns.

We further formalize this remark using our framework in Appendix \ref{app:non_linear_clarification}.

\section{Related Works}
\textbf{Evaluation of CoT reasoning\,\,}
Several recent works propose methodologies for evaluating CoT reasoning \citep{wei2024cot, havrilla2024glore, li2023making, joshi2023improving, Nguyen2024DirectEO, wang2024grokked, yu2024metamath, xie2024online}. 
For example, \cite{li2023making} verifies individual steps in a model's CoT reasoning by generating multiple LLM responses per prompt and comparing correct responses with incorrect ones. 

Similarly, \cite{wang2024mathshepherdverifyreinforcellms, wang2024multistepproblemsolvingverifier} use a fine-tuned LLM to decode multiple reasoning paths from each step and check the correctness of these reasoning paths. 
However, as we show in our experiments, approaches which simply rely on the correctness of the final output are not sound in general and can lead to false positives. Moreover, these solutions may not be plausible for problems of high difficulty where correct LLM responses might be scarce.
\textbf{Formalising CoT framework \,\,}
The formalisation of LLM reasoning remains an active area of research. Most notably \cite{gonzalez2023beyond} introduces a formal framework for LLMs and is a key source of inspiration behind our formalism. 
Additionally, \cite{feng2023towards} theoretically examines the expressivity of LLMs with CoT in solving mathematical and decision-making problems, focusing on the transformer architecture's implications on accuracy. 
Besides this, \cite{xu2024hallucination} provides a formal definition of hallucinations, but does not consider CoT reasoning specifically.

\textbf{Reward modelling \,\,}
Outcome-based reward models (ORM) \citep{cobbe2021trainingverifierssolvemath, havrilla2024glore, lightman2023lets} predict the probability of reaching the correct final answer based on a model's intermediate CoT steps. While they avoid requiring correct intermediate demonstrations, we show in Section \ref{sec:experiments} that they are unsound for detecting CoT reasoning errors. Step-wise ORM (SORM) \citep{havrilla2024glore} extends ORM by estimating the probability of an `optimal' model reaching a correct answer but requires training a larger, more capable model than the base model.

Process-based reward modelling (PRMs) \citep{lightman2023lets, uesato2022solving} is an alternative approach which directly predicts the correctness of intermediate CoT reasoning steps. 
Likewise, various other approaches rely on annotated CoT datasets for benchmarking 
 \citep{jacovi2024chainofthought, yu2024metamath, Amini2019Mathqa, liu2020logiqa, xi2024training, Nguyen2024DirectEO, xie2024online, mcleish2024transformers}.
While these benchmarks and methodologies aid LLM reasoning, collecting annotated data is costly and not easily scalable. In contrast, our approach evaluates an LLM's CoT reasoning without human-annotated CoT data.

\section{Experiments}\label{sec:experiments}
In this section, we demonstrate our framework's utility, dubbed Information-Gain (IG) and compare against two baselines for detecting errors in a model's CoT reasoning. Here we assume access only to the model's CoT generations \( \state_0, \state^M_1, \ldots, \state^M_T \) and correct final answers \( \finaloutput \).

\paragraph{\textbf{Outcome Reward Model (ORM) \citep{havrilla2024glore} \,\,} }
    This involves training a classifier, denoted as $f_{\textup{ORM}}$, which takes as input model generations up to any step $t$ in its CoT reasoning, $\state^M_t$, and predicts the probability of the model's final answer being correct, i.e.
    \begin{equation}
        f_{\textup{ORM}}(\state^M_t) \approx \p(\finaloutput^M = \finaloutput \mid \state^M_t). \label{eq:probability_correctness}
    \end{equation}
    Here, if we observe that this probability of correctness drops significantly after step $t$, i.e. $f_{\textup{ORM}}(\state^M_t) \gg f_{\textup{ORM}}(\state^M_{t+1})$, this indicates that the model applies task $\lambda_{t+1}$ incorrectly. 

\paragraph{\textbf{Math-Shepherd (MS) \citep{wang2024mathshepherdverifyreinforcellms}}}
    This method quantifies the \emph{potential} for a given reasoning process $\state^M_t$ by using a `completer' model to generate $N$ completions of each reasoning process starting from step $t$, $\{(\state^M_t, \state^M_{t+1, j}, \ldots, \state^M_{T, j}, \finaloutput^M_{j})\}_{j\leq N}$, where $\finaloutput^M_{j}$ denotes the final answer reached in the $j$'th completion. Then, we estimate the potential of this step based on the proportion of  correct answers among the $N$ completions: 
    \begin{align}
    f_{\textup{MS}}(\state^M_t) \coloneqq \sum_{j=1}^N \ind(\finaloutput^M_{j} = \finaloutput)/N. \label{eq:math_shepherd_correctness}
    \end{align}
    For a fair comparison, we do not assume access to a `verifier' model more capable than our base model. Therefore, we use the base model as the completer model in our experiments.

\subsection{Toy data experiments}
We first consider a toy setting where we control model behaviour across tasks. Prompts consist of an integer vector \( Z_0 \in \mathbb{Z}^5 \) sampled from a given distribution. The task \( \lambda \) comprises five steps, \( \lambda = \lambda_1 \circ \ldots \circ \lambda_5 \), where each sub-task \( \lambda_i \) transforms \( Z_{i-1} \in \mathbb{Z}^5 \) into \( Z_i \in \mathbb{Z}^5 \). The correct final answer \( \finaloutput \) is \( Z_5 \). Additional details on data generation and sub-tasks are in Appendix \ref{sec:toy-data-appendix}.

\textbf{Generating the dataset \,\,}
To investigate partial unidentifiability for a given task $\lambda_i$ we modify the obtained dataset by introducing `noise' at step $i$. In other words, the task $\lambda_i$ is applied incorrectly on a subset of the data, whereas all other tasks are always applied correctly.
This represents a model which sometimes fails at step $i$ and we use `LLM$_i$' to denote this model in this experiment.
We repeat this procedure for all tasks $\lambda_i$ for $i\in \{1, \ldots, 5\}$ which yields 5 LLMs $\{\llm_1, \ldots, \llm_5 \}$. 

To assess robustness, we introduce a special case in \( \llm_3 \), where task \( \lambda_3 \) is applied incorrectly iff the output after \( \lambda_2 \) falls in a set \( \mathcal{S} \). This deliberate choice highlights a pitfall of existing baselines and contrasts with other LLMs, where errors occur randomly. In other words, \( \lambda_3 \)'s correctness depends on \( \lambda_2 \)'s output. For details, see Appendix \ref{sec:toy-llms}.

\subsubsection{Results}
Figure \ref{fig:heatmaps} shows how the different baselines quantify the correctness of the different tasks for the 5 different LLMs under consideration. This figure only considers samples where the final answer of the LLM was incorrect, i.e. $\finaloutput^M \neq \finaloutput$.
For our method (IG), Figure \ref{fig:our_method_heatmap}  shows the information-gain across the different steps for each LLM. Likewise,
Figure \ref{fig:orm_heatmap} presents the results for ORM and shows how the average probability of correctness in \eqref{eq:probability_correctness} changes across the different steps, whereas, for Math-Shepherd, Figure \ref{fig:ms_heatmap} shows the proportion of correct completions starting after each step \eqref{eq:math_shepherd_correctness}.
Here, any significant drop in the plotted values indicate an incorrect application of a task.

\textbf{Information-gain rightly quantifies step-wise correctness \,\,\,}
We observe that the information-gain remains positive for each LLM until we encounter an incorrect reasoning step, at which point it drops to negative values. Therefore, our method can identify the incorrectly executed task for each LLM under consideration. We used a GPT-2 supervisor model to estimate information-gain.

\textbf{Pitfall of the baselines \,\,\,}
While ORM and Math-Shepherd usually identify incorrect reasoning steps, they fail for $\llm_3$. This is because $\lambda_3$ is misapplied iff the output after $\lambda_2$ lies in $\mathcal{S}$. Thus, the classifier can predict the final output's correctness at $\lambda_2$ by checking if $Z_2$ lies in $\mathcal{S}$, leading to overconfidence in error detection at $\lambda_2$ instead of $\lambda_3$.

\begin{figure*}[t]
    \centering
    \begin{minipage}{\textwidth}
        \centering
        \includegraphics[height=0.32in]{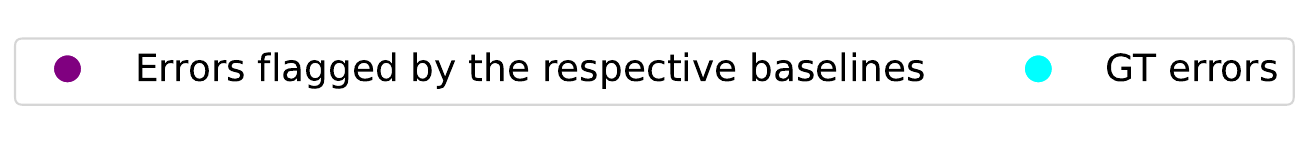}

        \begin{subfigure}[t]{0.33\textwidth}
            \centering
            \includegraphics[height=1.8in]{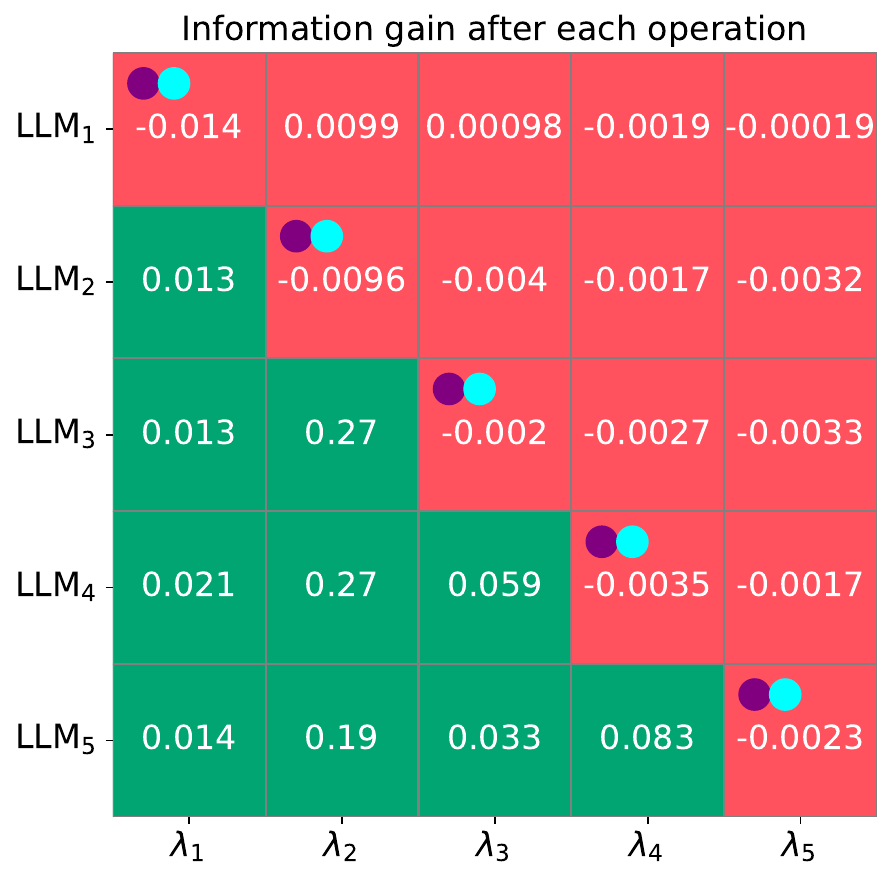}
            \caption{Our results}
            \label{fig:our_method_heatmap}
        \end{subfigure}%
        \begin{subfigure}[t]{0.33\textwidth}
            \centering
            \includegraphics[height=1.8in]{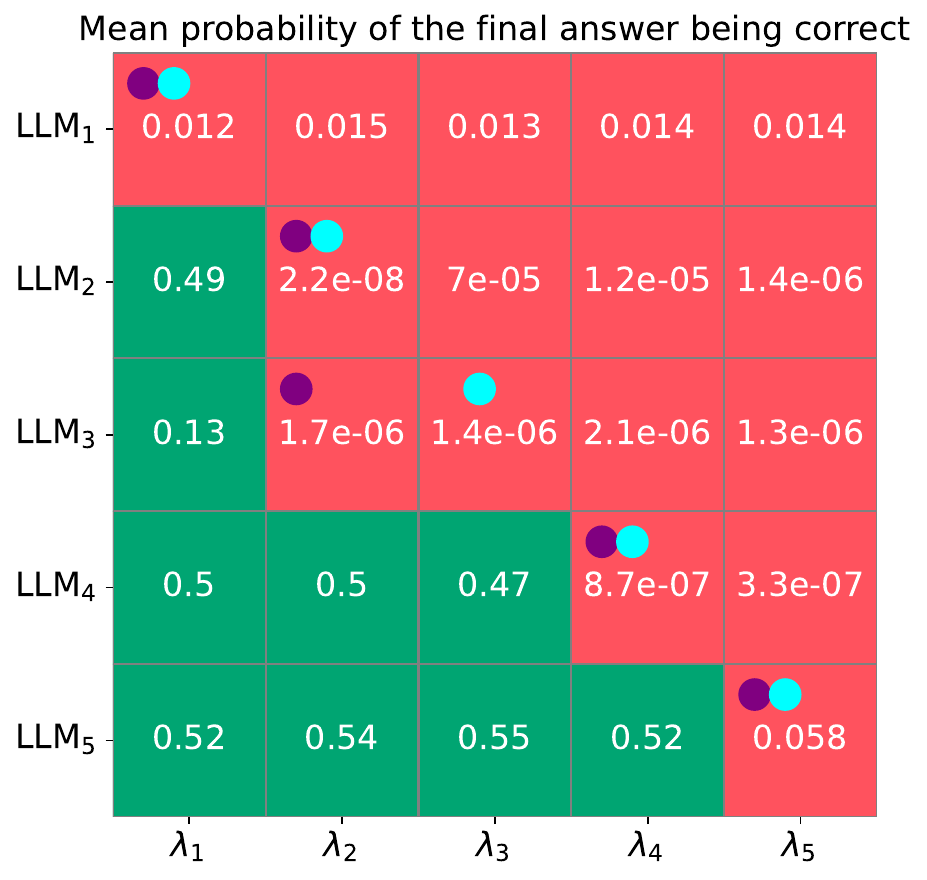}
            \caption{ORM results}
            \label{fig:orm_heatmap}
        \end{subfigure}%
        \begin{subfigure}[t]{0.33\textwidth}
            \centering
            \includegraphics[height=1.8in]{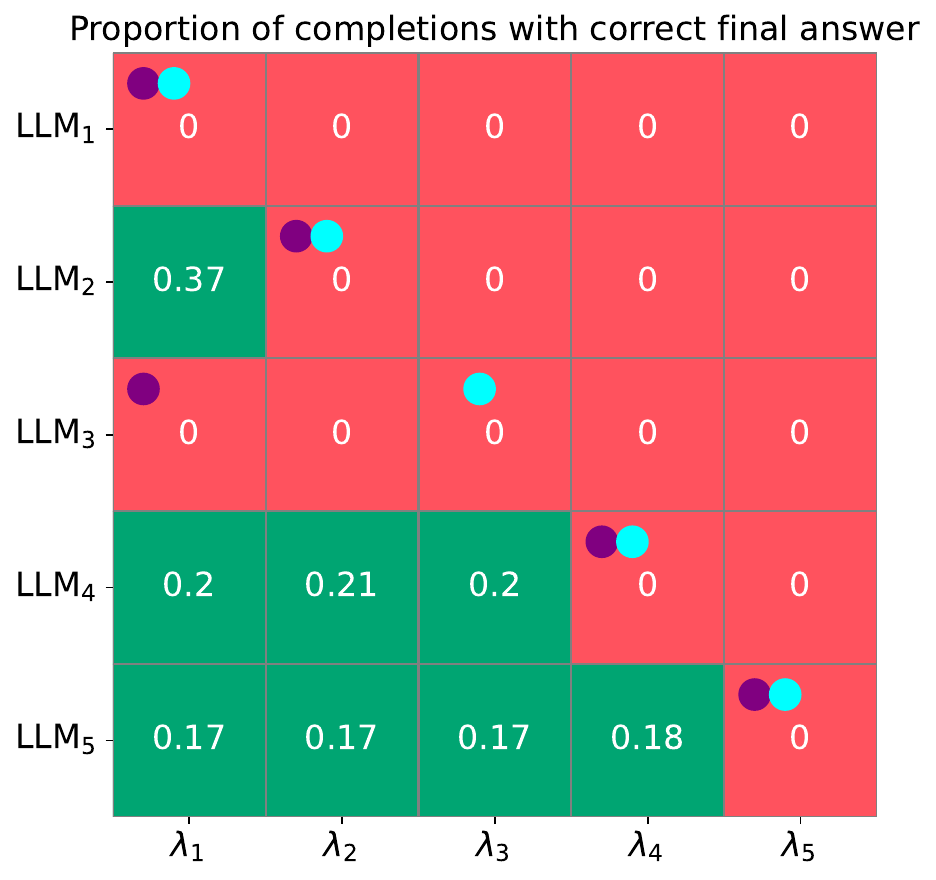}
            \caption{Math-Shepherd results}
            \label{fig:ms_heatmap}
        \end{subfigure}
    \caption{Heatmaps (left) quantifying the correctness of different sub-tasks for the 5 LLMs using the different baselines, and the associated 
        classification metrics (right). Red color in the heatmaps indicates a significant drop in the plotted metrics (an incorrectly executed sub-task).}
    \label{fig:heatmaps}
    \end{minipage}
   
\end{figure*}

Similarly, with Math-Shepherd for $\llm_3$ (using the same model as a completer), a completion is incorrect if the output after $\lambda_2$ lies in $\mathcal{S}$. Here, all completions fail, regardless of the starting step, making it impossible to pinpoint where $\llm_3$ goes wrong.

\begin{center}
    \captionof{table}{Sample-wise classification of a sub-task for $\llm_3$.}\label{tab:accuracies_toy_data}
    \vspace{0.2cm}
\setlength{\tabcolsep}{3pt}
\begin{scriptsize}
    \begin{sc}
\begin{tabular}{lccc}

    \toprule
              & Acc $\uparrow$   & TPR $\uparrow$   & FPR $\downarrow$  \\
    \midrule
    IG (Ours) & \textbf{0.96}    & 0.98             & \textbf{0.06}     \\
    ORM       & 0.77             & 0.98             & 0.54              \\
    MS        & 0.60             & \textbf{1.0}     & \textcolor{red}{1.0}               \\
    \bottomrule
\end{tabular}
\end{sc}
\end{scriptsize}

\end{center}
\textbf{Sample-wise detection \,\,\,}
We also use the different baselines for sample-wise detection of erroneous steps, as outlined in Section \ref{sec:conditional_independence_testing}. A step is classified as incorrect if a baseline's metric falls below a threshold. Table \ref{tab:accuracies_toy_data} presents the results for $\llm_3$, with optimal thresholds chosen from a held-out dataset. Our method achieves significantly higher accuracy and fewer false positives than the baselines, making it more reliable for sample-wise error detection.

\subsection{Arithmetic operations on Llama-3-8B}\label{sec:arithmetic}
Following our toy experiments, we now evaluate our framework in a more realistic setting using the Llama-3-8B model \citep{dubey2024llama3herdmodels}. We focus on a simple arithmetic task that involves both multiplication and addition. The goal is to assess the model's performance on each operation.

\textbf{Experimental setup \,\,\,} We sample two integers $x$ and $y$ uniformly from $[1, 10^5)$. The prompt to the model is:

\textbf{Prompt:} ``\textit{x = \{x\}, y = \{y\}, Please calculate the following:
1. 3x, \,\,
2. 2y, \,\,
3. 3x + 2y }'' 

\textbf{Model Accuracy \,\,\,} The model's accuracy across steps is:

\begin{center}    
\begin{tabular}{ccc}
    Step 1: 80\%,\qquad \qquad & Step 2: 98\%,\qquad \qquad & Step 3: 42\%.
\end{tabular}
\end{center}

Most failures occur in Step 3, which involves adding previously computed values. Analyzing the $(x, y)$ distribution where the model is incorrect (Figure \ref{fig:distribution_map_math}), we find that errors mainly arise when one variable is large and the other is small. This suggests that correctness depends heavily on $(x, y)$, making it difficult for baselines to pinpoint the erroneous step in the model’s CoT reasoning.

\begin{figure}[t]
    \centering
    \includegraphics[width=0.4\textwidth]{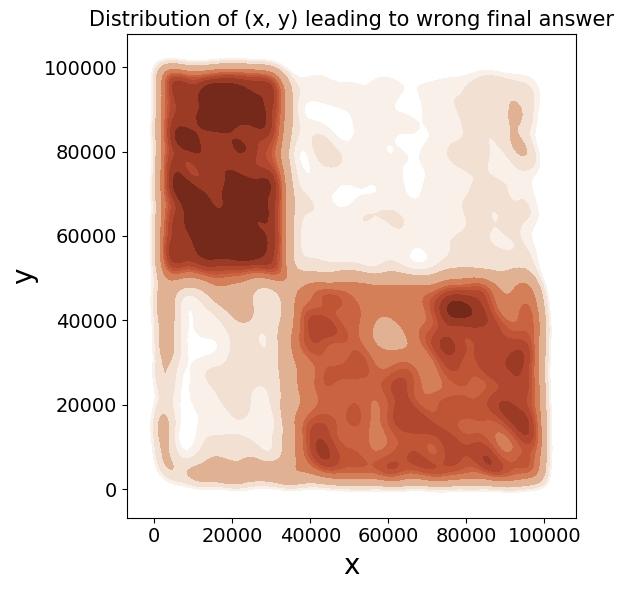}
    \caption{The distribution of $(x, y)$ for incorrect samples: Llama-3-8B struggles to add large and small numbers (represented by the top-left and bottom-right shaded regions).}
    \label{fig:distribution_map_math}
\end{figure}

\subsubsection{Results}
\textbf{Our method \,\,\,}
We trained the supervisor model by fine-tuning a Llama-3-8b model using Low Rank Adaptation (LoRA) \citep{edward2021lora}. Table \ref{tab:combined_results} shows that there is a significant drop in information-gain at step 3 relative to steps 1 and 2, demonstrating that our method correctly identifies that the failure mainly occurs at step 3. 

\textbf{Outcome Reward Model (ORM) \,\,\,}  
For ORM, the mean probability of correctness (Table \ref{tab:combined_results}) remains unchanged at each step. Figure \ref{fig:distribution_map_math} suggests this is because ORM predicts correctness based solely on $x$ and $y$ in the prompt. Crucially, its confidence remains constant even as intermediate reasoning steps are added, preventing it from distinguishing the model’s performance at different steps.

\textbf{Math-Shepherd (MS) \,\,\,}  
Table \ref{tab:combined_results} shows the proportion of correct completions for MS. While this is low at step 3, only 5-7\% of completions from steps 1 and 2 yield a correct output, despite errors mostly occurring at step 3. This is because Llama-3-8B's correctness is largely determined by $(x, y)$ in the prompt (Figure \ref{fig:distribution_map_math}). As a result, MS frequently mislabels steps 1 and 2 as incorrect, leading to a higher false positive rate compared to our baseline.

\begin{table*}[!htp]
\centering
\caption{Experimental results for Toy Arithmetic, GSM8K and PRM800K experiments. In each of the datasets, we denote the ``correct" and ``wrong" steps with \textcolor{green}{\ding{51}} and \textcolor{red}{\ding{55}} respectively. }
\vspace{0.2cm}
\label{tab:combined_results}
\begin{scriptsize}
\begin{sc}
\resizebox{1\columnwidth}{!}{%
\begin{tabular}{l|l|c|c|c|c||c|c|c}
\toprule

\multicolumn{2}{c|}{\textbf{Datasets \& Methods}} 
& \multicolumn{4}{c||}{\textbf{Operations}} 
& \multicolumn{3}{c}{\textbf{Sample-wise detection metrics}} \\
\midrule
\multirow{4}{*}{\textbf{Toy Arithmetic}} 
 & \textbf{Methods}   
 & \textbf{Step 1: $3x$} \textcolor{green}{\ding{51}} 
 & \textbf{Step 2: $2y$} \textcolor{green}{\ding{51}}
 & \textbf{Step 3: $3x + 2y$} \textcolor{red}{\ding{55}}
 & \textbf{-} %
 & \textbf{Accuracy} $\uparrow$ 
 & \textbf{TPR} $\uparrow$ 
 & \textbf{FPR} $\downarrow$ \\
\cmidrule(lr){2-9}
 & IG (Ours)       
 & 0.67    & 0.24    & \textcolor{red}{0.027}  & -
 & \textbf{0.76}   & 0.51   & \textbf{0.02} \\
 & ORM             
 & 0.24    & 0.24    & 0.24                   & -
 & 0.56             & \textcolor{red}{0.10}   & 0.07 \\
 & MS   
 & \textcolor{red}{0.068} & \textcolor{red}{0.059} & \textcolor{red}{0.00069} & -
 & 0.53             & \textbf{0.99}           & \textcolor{red}{0.86}\\
\midrule
\midrule
\multirow{3}{*}{\textbf{GSM8K}} 
 & \textbf{Methods} 
 & \textbf{Addition} \textcolor{green}{\ding{51}}
 & \textbf{Multiplication} \textcolor{red}{\ding{55}}
 & \textbf{Division} \textcolor{green}{\ding{51}}
 & \textbf{Subtraction} \textcolor{green}{\ding{51}}
 & \textbf{Accuracy} $\uparrow$
 & \textbf{TPR} $\uparrow$
 & \textbf{FPR} $\downarrow$ \\
\cmidrule(lr){2-9}
 & IG (Ours)       
 & 0.99 & \textcolor{red}{0.026} & 1.05 & 1.06
 & \textbf{0.72} & 0.95 & \textbf{0.62} \\
 & ORM             
 & 0.46 & \textcolor{red}{0.024} & 0.38 & \textcolor{red}{0.013}
 & 0.58 & \textbf{1.00} & \textcolor{red}{1.00}\\
\midrule
\midrule
 \multirow{3}{*}{\textbf{PRM800K}} 
 & \textbf{Methods} 
 & \textbf{Negative} \textcolor{red}{\ding{55}}
 & \textbf{Neutral} \textcolor{red}{\ding{55}}
 & \textbf{Positive} \textcolor{green}{\ding{51}}
 & -
 & \textbf{Accuracy} $\uparrow$
 & \textbf{TPR} $\uparrow$
 & \textbf{FPR} $\downarrow$ \\
\cmidrule(lr){2-9}
 & IG (Ours)       
 & \textcolor{red}{0.058} & \textcolor{red}{-0.011} & 0.168 & -
 & \textbf{0.74} & \textbf{0.84} & 0.37 \\
 & ORM             
 & 0.734 & 0.745 & 0.744 & -
 & 0.69 & 0.55 & \textbf{0.18} \\
\bottomrule
\end{tabular} %
}
\end{sc}
\end{scriptsize}
\end{table*}

\textbf{Sample-wise detection \,\,\,} 
When using these methods for sample-wise detection of incorrect steps, our approach yields the highest accuracy among the baselines considered.
This superior performance is attributed to the fact that baselines like ORM and MS often falsely flag steps 1 and 2 as incorrect, as evidenced by their high FPRs in Table \ref{tab:combined_results}.

\subsection{Experiments on the controlled GSM8K Dataset}
To evaluate our method on a complex dataset, we conducted experiments on GSM8K \citep{cobbe2021trainingverifierssolvemath}, controlling specific factors for more interpretable results.

We begin by using GPT-4 \citep{openai2024gpt4technicalreport} to generate answers for GSM8K questions where the ``multiplication'' operation is always done incorrectly, while all other operations are correct.
Next, we filtered the dataset to ensure that ``multiplication'', ``subtraction'', and ``addition'' never appeared together within the same Chain of Thought (CoT) solution. In particular, we ensured in our setting that, all incorrect final answers included both ``multiplication'' and ``subtraction'', whereas correct final answers did not involve either operation. This introduces a spurious correlation between ``subtraction'' and wrong answers.

In this setup, we mainly focused on evaluating ORM and our proposed method, as MS (with the same completer) fails trivially under these conditions. Specifically, ``multiplication'' is inherently unidentifiable, since any CoT containing ``multiplication'' negates the influence of other sub-tasks by design. Further details on the experimental setup can be found in Appendix \ref{sec:gsm-8k-appendix}.
\subsubsection{Results}  
Table \ref{tab:combined_results} shows that our information-theoretic approach (IG) successfully identifies the unidentifiable sub-task. Since the ``multiplication'' rule is intentionally incorrect, it yields minimal to no information gain, as expected. However, ORM results reveal a different pattern: both ``multiplication'' and ``subtraction'' have low correctness probabilities, as they are linked to incorrect final answers. This suggests that the standard ORM approach may misleadingly classify ``subtraction'' as incorrect.

Additionally, in our sample-wise experiment, we observe a similar trend when we use the methods to assess the sample-wise correctness of ``multiplication" and ``subtraction" for each prompt. Here, our proposed method not only accurately detects the unidentifiable sub-task but also highlights a significant shortcoming of ORM. Specifically, ORM falsely flags ``subtraction'', which is actually correct, as an incorrect sub-task due to spurious correlations.

\subsection{Experiments on the PRM800K dataset}
To further demonstrate the practical applicability of our method, we have conducted an additional experiment on OpenAI's PRM800k dataset \citep{lightman2023lets} which is obtained by labeling the intermediate steps of the MATH  dataset \citep{hendrycks2021measuring}.

More specifically, this dataset is a process supervision dataset with step-level correctness labels for model-generated solutions to MATH problems. 
To create it, \citet{lightman2023lets} asked human annotators to label each step from fine-tuned GPT-4 solutions as positive (+1), negative (-1), or neutral (0). 
A positive label indicates a correct, reasonable step; 
a negative label denotes an incorrect or unreasonable step; 
and a neutral label indicates ambiguity (e.g., subtly misleading or technically valid yet poor).

The objective is to identify incorrect Chain-of-Thought (CoT) steps, specifically those labelled as (-1) by annotators, using our method as well as ORM. However, we do not utilize the step-wise labels during the process; they are only used for evaluation purposes. Since the base GPT-4 model used to generate the PRM data is not publicly available, we were unable to obtain MS results for this dataset.

\subsubsection{Results}
Table \ref{tab:combined_results} shows the information gain and mean correctness probability for positive, negative, and neutral sub-steps.

As expected, these results show that the information-gain is significantly lower for incorrect steps (with labels -1) than for labels +1. Additionally, we also observe that the information-gain is negative for neutral steps (with labels 0), which is explained by the fact that these steps do not add any useful information regarding the ground truth (as these were deemed irrelevant/ambiguous by the human labellers).
In contrast, the average probability of correctness for the ORM classifier is roughly the same across each label and, on average, is not very informative.

\textbf{Sample-wise detection \,\,}
Additionally, we also used the sample-wise information-gain (IG) as well as the ORM baseline to classify if a step is correct (as outlined in Section \ref{sec:conditional_independence_testing}). To avoid ambiguity, we filtered out the neutral sub-steps (with labels 0) for this experiment and considered a balanced held-out dataset with equal number of correct and incorrect steps. Table \ref{tab:combined_results} also shows the sample-wise results for both methods (where we chose the best thresholds for each baseline using a held-out dataset).

It can be seen that the accuracy of our method is higher than that of the ORM classifier. Additionally, our method also leads to higher TPR (and hence a lower FNR) than the ORM classifier. These results show that our method outperforms the outcome-based baselines on more complex datasets such as the MATH data as well.

\vspace{-0.2cm}
\section{Discussion and Limitations}
In this paper, we introduce a novel information-theoretic approach to evaluate Chain-of-Thought (CoT) reasoning in LLMs without annotated intermediate steps. Our framework effectively identifies erroneous reasoning across diverse settings and consistently outperforms baselines, including Outcome Reward Models (ORMs) \citep{havrilla2024glore} and Math-Shepherd (MS) \citep{wang2024mathshepherdverifyreinforcellms}. However, our approach does have some limitations.

Although our method avoids human-annotated step-wise data, it requires additional training of the supervisor model, which is computationally expensive. Future work could explore in-context learning to estimate information gain, reducing training needs and improving efficiency. Additionally, while our method does not require correctness labels for every step, we still need to categorize each step according to its respective sub-task. However, this limitation is not unique to our method, as both ORM and MS also rely on such labels to draw sub-task-specific conclusions.

Lastly, while we focus on logical and mathematical datasets, our method also extends to other domains requiring CoT reasoning, such as Blocks World \citep{slaney2001blocks}. As we discuss in Appendix \ref{app:beyond-maths-data}, this is an interesting avenue which we leave for future research.

\section*{Impact Statement}
This paper presents work whose goal is to advance the field of Machine Learning. There are many potential societal consequences of our work, none which we feel must be specifically highlighted here.

\section*{Acknowledgements}
We would like to thank Li Hang for his valuable insights and guidance during the development of this work.

\bibliographystyle{plainnat}
\bibliography{refs}

\newpage
\clearpage

\beginappendix
\section{Proofs}

\begin{proof}[Proof of Theorem \ref{prop:conditional_indep}]
    Suppose $\lambda$ and $\lambda'$ are two tasks with primitive decompositions $$\lambda' = \lambda'_1 \circ \dots \circ \lambda'_{T'}$$ and 
    \begin{align}
        \lambda = \lambda_1 \circ \dots \circ \lambda_T, \label{eq:lambda_primitive_decomposition}
    \end{align}
    where $\arg\min_t\{\lambda_t \not \in \Span(\{\lambda'_1 , \dots , \lambda'_{T'}\})\} \leq k$. In other words, the primitive decompositions of $\lambda'$ and $\lambda$ diverge before step $k+1$. 
    Then, Assumption \ref{ass:bayenet} implies that for any $j\geq k$, we have that the answer $\finaloutput$ and $\state'_j$ are d-separated by $\state'_{j-1}$. Therefore, 
    \begin{align*}
        \finaloutput \indep \state'_{j} \mid \state'_{j-1}.
    \end{align*}

    Next, we know from Assumption \ref{ass:model} that there exists some task $\hat{\lambda} \in \Span(\Gamma^M)$ (possibly dependent on $\state_0$ and $\lambda$) such that $\Lambda^M(\state_0, \lambda) \eqd \Lambda(\state_0, \hat{\lambda})$. 
    Suppose that $\hat{\lambda}$ has primitive decomposition
    $$\hat{\lambda} = \tilde{\lambda}_1 \circ\dots\circ \tilde{\lambda}_{\tilde{T}},$$ then since $\hat{\lambda} \in \Span(\Gamma^M)$, we know that $\tilde{\lambda}_i \in \Gamma^M$ for $i\in\{1,\dots, \tilde{T}\}$.  
    If the primitive decomposition of $\lambda$ in \eqref{eq:lambda_primitive_decomposition} is such that $k = \arg\min_t \{\lambda_t \not \in \Span(\Gamma^M)\}$, 
    then we know that $\arg\min_t\{\lambda_t \not \in \Span(\{\tilde{\lambda}_1 , \dots , \tilde{\lambda}_{\tilde{T}}\})\} \leq k$.
    Then, from the above it follows that 
    \[
    Y \indep \state^M_j \mid \state^M_{j-1}.
    \]
    Here, we used the fact that $\state^M_j \eqd \Lambda(\state_0, \tilde{\lambda}_1\circ \dots \circ \tilde{\lambda}_j)$ using Assumption \ref{ass:model}. 
\end{proof}

    \begin{proof}[Proof of Proposition \ref{thm:conditional_mi}]
    \begin{align}        
    \E[\log{p(Y\mid \state^M_j)}] - \E[\log{p(Y\mid \state^M_{j-1})}]
    &= \E\left[\log {\frac{p(Y\mid \state^M_j)}{p(Y\mid \state^M_{j-1})}}\right] \nonumber\\
    &= \E\left[\log {\frac{p(Y\mid \state^M_j, \state^M_{j-1})}{p(Y\mid \state^M_{j-1})}}\right] \nonumber \\
    &= \E\left[\log {\frac{p(Y, \state^M_j\mid \state^M_{j-1})}{p(Y\mid \state^M_{j-1})\,p(\state^M_j\mid \state^M_{j-1})}}\right] \label{eq:conditional-mutual-info-fraction} \\
    &= \mathcal{I}(Y, \state^M_j\mid \state^M_{j-1}) \nonumber
    \end{align}
    Here, the second equality above arises from the fact that $\state^M_j$ also captures all the information captured in $\state^M_{j-1}$ (and possibly more). Therefore, conditional on $\state^M_j$, the state $\state^M_{j-1}$ is deterministic and hence, $Y \indep \state^M_{j-1} \mid \state^M_j$.
    \end{proof}

\subsection{Symmetry property of information-gain $\mathcal{I}(Y, \state^M_j\mid \state^M_{j-1})$}
The mutual information between two random variables $X$ and $Y$, $\mathcal{I}(X, Y)$, is symmetric in its arguments (i.e., w.r.t. $X$ and $Y$). However, the conditional mutual information $\mathcal{I}(Y, \state^M_j\mid \state^M_{j-1})$ satisfies a symmetry property \emph{conditional} on $\state^M_{j-1}$. Formally, this property of the information-gain term can be expressed as follows:

There exists some functional $\mathcal{F}: \mathcal{P} \times \mathcal{P} \times \mathcal{P} \rightarrow \mathbb{R}$ where $\mathcal{P}$ is the space of probability distributions, such that 
\begin{enumerate}
    \item The information-gain $\mathcal{I}(Y, \state^M_j\mid \state^M_{j-1})$ can be expressed as: 
    \[
    \mathcal{I}(Y, \state^M_j\mid \state^M_{j-1}) = E\left[\mathcal{F}\left(P_{Y\mid X_{j-1}^M}, P_{X_j^M\mid X_{j-1}^M}, P_{Y, X_j^M\mid X_{j-1}^M}\right)\right],
    \]
    where $P_{Z}$ denotes the distribution of the random variable $Z$ and
    \item $\mathcal{F}$ is symmetric w.r.t. its first two arguments, i.e. $\mathcal{F}(p, q, r) = \mathcal{F}(q, p, r) $. Note that there is no symmetry requirement w.r.t. the third argument of $\mathcal{F}$ because the joint distribution $P_{Y, X_j^M\mid X_{j-1}^M}$ is already symmetric w.r.t. $Y$ and $X_j^M$ i.e. $P_{Y, X_j^M\mid X_{j-1}^M}= P_{X_j^M, Y\mid X_{j-1}^M}$.
\end{enumerate}
It follows from Eq. \eqref{eq:conditional-mutual-info-fraction} in the proof of Proposition \ref{thm:conditional_mi}, that the functional $\mathcal{F}$ which satisfies the above conditions is 
\[
\mathcal{F}(p, q, r) = - E_{X\sim p}[\log p(X)] - E_{X\sim q}[\log q(X)] + E_{X\sim r}[\log r(X)].
\] 

\section{Additional details of our framework}

\subsection{State-conditioned unidentifiability}\label{app:state-conditioned-unidentifiability}
In practice, the concept of unidentifiability may depend on the initial state $\state_0$. For instance, an LLM might accurately perform addition for 2-digit numbers but fail with 10-digit numbers \citep{razeghi2022impactpretrainingtermfrequencies}. Our framework can be extended to account for such cases by explicitly incorporating the distribution of initial states into the notion of identifiability. For example, addition could be considered unidentifiable when the initial state distribution is $p(\state_0 \mid \state_0 \textup{ includes 10-digit numbers})$. 
However, for simplicity, we keep this distributional dependence implicit in our framework.

\subsection{How to define steps beyond mathematical datasets}\label{app:beyond-maths-data}
The framework presented in this paper primarily considers examples related to mathematical reasoning, where the definition of primitive tasks is intuitive and well-structured. However, our methodology could be applied to other domains where the identification of primitive tasks is less straightforward, such as Blocks World \citep{winograd1972blocksworld, slaney2001blocks} and commonsense question answering (QA) \citep{talmor2019commonsenseqaquestionansweringchallenge}.

In the case of Blocks World, the primary task of planning can be decomposed into sub-tasks involving sequences of primitive actions, such as ``stack", ``unstack" and ``move". Applying the information-gain methodology in this context could provide insights into the effectiveness of large language models (LLMs) in planning and executing these sub-tasks. By analyzing the information-gain for each step, it would be possible to assess where the model's reasoning process is effective and where it encounters difficulties.

Similarly, in commonsense QA, the chain-of-thought (CoT) reasoning steps can be categorized into distinct types, such as causal reasoning (identifying cause-and-effect relationships), temporal reasoning (understanding sequences and timing), and spatial reasoning (comprehending physical arrangements and object relationships). These reasoning types align naturally with the proposed framework, enabling a systematic evaluation of the LLM’s decision-making process within each category.

While these extensions present promising directions for future research, they also introduce additional challenges, particularly regarding the assumptions underlying our methodology. Addressing these challenges and validating the framework across diverse domains remains an open avenue for further investigation.

\subsection{Correct final answers with incorrect intermediate steps}

A notable phenomenon of chain-of-thought (CoT) reasoning is the occurrence of cases where a model arrives at the correct final answer despite containing errors in intermediate steps. This scenario raises important questions regarding the validity of intermediate reasoning and the implications for evaluating model performance.

The methodology proposed in this paper estimates the information contributed by each successive reasoning step toward the final correct answer. In instances where an intermediate step is incorrect, it is expected that this step contributes no additional relevant information, resulting in an information-gain of zero at that point—regardless of whether the final answer is ultimately correct. Conversely, if a model systematically produces incorrect intermediate steps that nevertheless lead to the correct final answer, baseline methods such as ORM and Math Shepherd would fail to detect such errors, as these approaches primarily assess correctness based on the final output.

To better understand the prevalence of this phenomenon, we conducted an additional analysis of our datasets. In our arithmetic experiments (Section \ref{sec:arithmetic}), we found that only 1.2\% of samples exhibited this behavior, where an intermediate step was incorrect but the final answer remained correct. In the synthetic toy experiment, such cases did not occur, as we maintained a degree of control over the data generation process. Given the low frequency of these occurrences in our settings, their impact on the overall effectiveness of our framework is minimal.

\subsection{Clarification on Assumption 3.1 and Non-Linear Reasoning Structures} \label{app:non_linear_clarification}

\paragraph{\textbf{Overview of Assumption 3.1}}
Assumption 3.1 specifically addresses a particular class of reasoning steps: those that are \emph{necessary} for solving a problem but \emph{unidentifiable} in the training data, meaning no composition of learned tasks can yield that step. For such unidentifiable steps, we assume that once the model diverges at this point, subsequent steps do not add further information toward the correct final output. This assumption applies only to steps that cannot be executed through any combination of the model's learned capabilities.

\paragraph{\textbf{Application to Reasoning Systems}}
Although our framework is presented using linear chains-of-thought for clarity, the information-gain metric naturally accommodates the more complex reasoning patterns found in modern systems like O1/R1-style models. These reasoning models often explore multiple solution paths, backtrack when encountering errors, and dynamically self-correct their approach. Our framework handles such non-linear reasoning structures through its information-theoretic foundation, without requiring modifications to the core assumptions.

\paragraph{\textbf{Mathematical Formalization}}
Consider a scenario where a model explores alternative paths during reasoning. Let the correct reasoning path to final answer $Y$ be:
\begin{equation}
X_0 \rightarrow X_1 \rightarrow \cdots \rightarrow X_T \rightarrow Y
\end{equation}

If the model temporarily explores an incorrect or exploratory path through some intermediate step $Z$ at position $t$, but then returns to continue correctly, the full reasoning trace becomes:
\begin{equation}
X_0 \rightarrow \cdots \rightarrow X_t \rightarrow Z \rightarrow X_t \rightarrow X_{t+1} \rightarrow \cdots \rightarrow X_T \rightarrow Y
\end{equation}

Our framework evaluates such paths through conditional mutual information. Since $Y \perp Z \mid X_t$ (the final answer is conditionally independent of the exploratory step given the state at $X_t$), the information gain at step $Z$ will be zero or negative. This correctly indicates that the exploratory step $Z$ does not contribute meaningful information toward the final answer. Once the model returns to the productive path at $X_t$, subsequent steps will exhibit positive information gain, reflecting meaningful progress toward $Y$.

\paragraph{\textbf{Distinction Between Unidentifiable and Exploratory Steps}}
It is crucial to distinguish between two types of problematic steps:
\begin{itemize}
    \item \textbf{Unidentifiable steps} (addressed by Assumption 3.1): Steps that are necessary for the solution but cannot be executed through any composition of the model's learned tasks. These represent fundamental gaps in the model's capabilities.
    
    \item \textbf{Exploratory or incorrect steps}: Steps where the model temporarily pursues an unproductive path but can self-correct using its existing capabilities. These steps will show low or negative information gain but do not prevent the model from eventually finding the correct solution.
\end{itemize}

Self-corrected steps demonstrate that the model possesses the necessary learned operations to eventually find the correct path, whereas unidentifiable steps represent gaps that cannot be bridged through any composition of training tasks.

\paragraph{\textbf{Empirical Validation}}
Our experiments on the MATH/PRM dataset (Section \ref{sec:experiments}) confirm this theoretical framework: steps labeled as uninformative or incorrect by human annotators consistently show low or negative information gain, while correct steps exhibit high positive information gain. This demonstrates that our method reliably evaluates both linear and non-linear reasoning patterns without requiring special handling for self-corrections or exploratory paths.

\section{Additional Experimental Details}

\subsection{Toy Data Experiments}\label{sec:toy-data-appendix}

In this section, we describe the exact procedure used to generate the toy data for training and evaluating the models in our experiments. The dataset is constructed through five sequential operations (or tasks) applied to an initial state $z_0$, where each task $\lambda_i$ generates an intermediate state $z_i$. Both \textbf{correct} and \textbf{incorrect} examples were generated, with incorrect examples created by introducing random noise or permutations into the transformations.

The data was used to represent models $\text{LLM}_1$, $\text{LLM}_2$, ..., $\text{LLM}_5$, each corresponding to a setting where a specific task $\lambda_i$ was partially corrupted to simulate an unidentifiable task for that model.

\subsubsection{Data Generation Tasks}

For each prompt, an initial 5-element vector $z_0$ was randomly sampled, and we use the notation $z_0[i]$ to denote the $i$'th component of this vector. Next, the following tasks were applied sequentially:

\paragraph{\textbf{Task $\lambda_1$: Pairwise Swapping}}
\begin{itemize}
    \item Correct Mapping: The first and second elements, as well as the third and fourth elements of $z_0$, are swapped:
    \[
    z_1[0], z_1[1], z_1[2], z_1[3] = z_0[1], z_0[0], z_0[3], z_0[2]
    \]
    \item Incorrect Mapping: The entire vector is shuffled randomly.
\end{itemize}

\paragraph{\textbf{Task $\lambda_2$: Cumulative Summation}}
\begin{itemize}
    \item Correct Mapping: The first three elements of $z_1$ are replaced by their cumulative sum, and the fourth and fifth elements are swapped:
    \[
    z_2 = [z_1[0], z_1[0] + z_1[1], z_1[0] + z_1[1] +z_1[2],
    z_1[4], z_1[3]]
    \]
    \item Incorrect Mapping: Each element of $z_1$ is perturbed by adding a random integer between 10 and 99:
    \[
    z_2[i] = z_1[i] + U_i \quad \text{for each $i$ where $U_i$ is a randomly sampled integer between 10 and 99} 
    \]
\end{itemize}

\paragraph{\textbf{Task $\lambda_3$: Reverse and Cumulative Sum}}
\begin{itemize}
    \item Correct Mapping: The first three elements of $z_2$ are reversed, and the last two elements are replaced by their cumulative sum:
    \[
    z_3 = [z_2[2], z_2[1], z_2[0], z_2[3], z_2[3] + z_2[4]]
    \]
    \item Incorrect Mapping: As with task $\lambda_2$, each element of $z_2$ is perturbed by adding a random integer between 10 and 99.
\end{itemize}

\paragraph{\textbf{Task $\lambda_4$: Sorting and Elementwise Multiplication}}
\begin{itemize}
    \item Correct Mapping: The vector $z_3$ is sorted, and the first four elements are replaced by element-wise multiplications of specific pairs:
    \[
    z_4[0] = z_3[1] \times z_3[2], \quad z_4[1] = z_3[0] \times z_3[3], \quad z_4[2] = z_3[4] \times z_3[0], \quad z_4[3] = z_3[2] \times z_3[2]
    \]
    \item Incorrect Mapping: The vector is randomly shuffled.
\end{itemize}

\paragraph{\textbf{Task $\lambda_5$: Difference Calculation}}
\begin{itemize}
    \item Correct Mapping: The first element is replaced by the absolute difference of the first two elements of $z_4$, and other elements are transformed as follows:
    \[
    z_5 = [|z_4[0] - z_4[1]|, z_4[2], z_4[3], |z_4[3] - z_4[4]|, z_4[0]]
    \]
    \item Incorrect Mapping: The vector is randomly shuffled.
\end{itemize}

\subsubsection{Models $\text{LLM}_1, \text{LLM}_2, \dots, \text{LLM}_5$}\label{sec:toy-llms}
For each model $\text{LLM}_i$ ($i \in \{1, 2, 3, 4, 5\}$), the task $\lambda_i$ was selectively corrupted to simulate unidentifiability for that task. Specifically:
\begin{itemize}
    \item Correct Data: The task $\lambda_i$ was applied according to its correct mapping.
    \item Incorrect Data: The task $\lambda_i$ was applied using its incorrect mapping (random noise, shuffling, or perturbations).
\end{itemize}

For each $\text{LLM}_i$, the tasks $\lambda_1$ to $\lambda_{i-1}$ and $\lambda_{i+1}$ to $\lambda_5$ were correctly applied, but task $\lambda_i$ was corrupted for a subset of the data. More specifically, for all LLMs except $\llm_3$, the error was introduced at step $i$ at random with probability 0.5. In contrast, for $\llm_3$, the error was introduced at step 3 if and only if the output at step 2, $z_2$ satisfies, $z_2[2] > 150$. This choice was deliberately made to highlight a pitfall of the baselines as explained in Section \ref{sec:experiments}.

\paragraph{\textbf{String Representation of Chain-of-Thought (CoT)}}

Next, we convert each sequence of vectors $z_0, z_1, \dots, z_5$ produced by the tasks into a string-based Chain-of-Thought (CoT) representation. Each intermediate state vector $z_i$ is expressed as a comma-separated list of its elements, and the transitions between the states are delimited by ``$||$''. This format explicitly captures the step-by-step reasoning process of the model.

For example, given an initial vector $z_0 = [83,48,14,98,25]$, applying the tasks sequentially yields intermediate states $z_1, z_2, \dots, z_5$. These states are concatenated into a single string, separated by ``$||$'' to represent the full reasoning chain:

\begin{center}    
    \texttt{83,48,14,98,25\, $||$ \,48,83,98,14,25 \, $||$ \, 48,131,229,25,14 \,$||$ \, 229,131,48,25,39 \, $||$\, 1872,3275,5725,2304,229 \, $||$ \, 1403,5725,2304,2075,1872}
\end{center}

\subsubsection{Training the supervisor model}\label{sec:training-supervisor-model}
To estimate the information-gain in \eqref{eq:information-gain},
we train a different LLM, which we refer to as the \emph{supervisor model} $g_{\textup{sup}}$.
As explained in Section \ref{sec:conditional_independence_testing}, this model takes as input the model's CoT reasoning up to any given intermediate step $t$, $\state^M_t$, and is fine-tuned to directly predict the ground truth final output $Y$. 
To this end, we use a special token to separate the model's CoT reasoning and the final output when fine-tuning $g_{\textup{sup}}$. At inference time, this special token when appended to the model input serves as an indication for the model to directly predict the final output.
In this way $g_{\textup{sup}}(\state^M_t)$ approximates the conditional distribution $p(\finaloutput \mid \state^M_t)$.

More specifically, in the toy setup discussed above, consider the following sample for model's CoT:
\begin{center}    
    \texttt{83,48,14,98,25\, $||$ \,48,83,98,14,25 \, $||$ \, 48,131,229,25,14 \,$||$ \, 229,131,48,25,39 \, $||$\, 1872,3275,5725,2304,229 \, $||$ \, 1403,5725,2304,2075,1872}
\end{center}
For this example, the ground truth final output $y$ is $y = ``\texttt{1403,5725,2304,2075,1872}''$ (i.e., the model reached the correct final output in the example above).

For the sample given above, we have that 
\begin{align*}
    \statex^M_0 &= \statex_0 = ``\texttt{83,48,14,98,25}''\\
    \statex^M_1 &=  ``\texttt{83,48,14,98,25\, || \,48,83,98,14,25 }''\\
    &\vdots\\
    \statex^M_5 &= ``\texttt{83,48,14,98,25\, || \,48,83,98,14,25 \, || \, 48,131,229,25,14 \,|| \, } \\ 
    &\qquad \texttt{229,131,48,25,39 \, || \,1872,3275,5725,2304,229 \, || \, }\\ 
    &\qquad \texttt{1403,5725,2304,2075,1872}''
\end{align*}
Next, to construct the data for fine-tuning the supervisor model, we used the special token ``$\texttt{\#|>}$'' to separate the model's CoT steps $\statex^M_i$ from the ground truth output $y$. This results in the following 6 training datapoints for the supervisor model:
\begin{enumerate}
    \item ``\texttt{83,48,14,98,25} \,\texttt{\#|>} \,\texttt{1403,5725,2304,2075,1872}'' 
    \item ``\texttt{83,48,14,98,25\,|| \,48,83,98,14,25 } \,\texttt{\#|>} \,\texttt{1403,5725,2304,2075,1872}''    
    \item[] \hspace{1.5em} \vdots
    \setcounter{enumi}{4}
    \item ``\texttt{83,48,14,98,25\,|| \,48,83,98,14,25 \,|| \,48,131,229,25,14 \,|| \,229,131,48,25,39 \,|| }\\  \texttt{1872,3275,5725,2304,229 \,|| \,1403,5725,2304,2075,1872 \,\#|> \, 1403,5725,2304,2075,1872}''
\end{enumerate}
    The above procedure allows us to obtain fine-tuning data for supervisor models separately for each of the 5 different LLMs, $\{\text{LLM}_1, \text{LLM}_2, \dots, \text{LLM}_5\}$. 
    Next, we train a separate GPT-2 model for each of the 5 different base LLMs. 

\subsubsection{Estimating the information-gain}\label{sec:estimating-information-gain}
Having trained the supervisor model on the data generated above, we evaluate the information-gain on a held-out dataset split. 
Given a datapoint $(\statex^M_i, y)$ in the evaluation split, we can estimate the 
sample-wise information-gain at step $i$ as follows:
\begin{itemize}
    \item Suppose that the model generation at step $i-1$, $\statex^M_{i-1}$ is tokenised as $(t_1, \ldots, t_{n_{i-1}})$ and similarly that $\statex^M_{i}$ is tokenised as $(t_1, \ldots, t_{n_{i}})$. Likewise, suppose that the true output $y$ is tokenised as $(t^*_1, \dots, t^*_k)$ and we use $<s>$ to denote the separator token (i.e. \texttt{\#|>} above).
    \item Then, to estimate the sample-wise for this datapoint, we estimate the difference:
    \begin{align*}
        &\frac{1}{k}\sum_{j=1}^k \log{p(t^*_j \mid (t_1, \ldots, t_{n_{i}}, <s>, t^*_1, \dots, t^*_{j-1}))} \\
        &\qquad- \frac{1}{k}\sum_{j=1}^k \log{p(t^*_j \mid (t_1, \ldots, t_{n_{i-1}}, <s>, t^*_1, \dots, t^*_{j-1}))}.
    \end{align*}
    Here, the supervisor model is trained to estimate the above conditional and therefore we use it to estimate the difference above.
\end{itemize}
Finally, to estimate the aggregate information-gain (instead of the sample-wise information-gain), we simply compute the average sample-wise gain over the evaluation data split.

\subsubsection{Additional results}
In Figures \ref{fig:ig_trajectories_toy} - \ref{fig:MS_trajectories_toy}, we present the sample-wise trajectories for 15 randomly chosen prompts leading to incorrect final answers, for the different baselines and LLMs under consideration. Here, any significant drop in the plotted value at a given step could be seen as an indication of an incorrectly executed sub-task. 
Recall that in our setup, in LLM$_i$, the CoT step $i$ is executed incorrectly with some probability whereas all other steps are always executed correctly.

Firstly, Figure \ref{fig:ig_trajectories_toy} presents sample-wise information-gain for our method for the five different LLMs. 
Here, we see that the sample-wise information remains high up until the incorrect step, at which point the information-gain sharply decreases. This suggests that sample-wise information-gain is sensitive to the specific point where the Chain of Thought goes wrong, making it effective at locating reasoning errors.

For the ORM and Math-Shepherd baselines in Figures \ref{fig:ORM_trajectories_toy} and \ref{fig:MS_trajectories_toy}, we observe that for all LLMs except LLM$_3$, the plotted metrics drop at the incorrect step. However, for LLM$_3$, we observe that ORM's probability of correctness drops at step 2 even though the error occurs at step 3. This occurs because, in our setup, the correctness of step 3 is determined directly from the output of step 2. Specifically, recall that in LLM$_3$, step 3 is executed incorrectly if and only if the output of step 2, $z_2$, has its second component greater than 150, i.e. $z_2[2] > 150$. Therefore, ORM becomes confident after the second step if a CoT is going to lead towards the correct final answer or not.

Similarly, for Math-Shepherd in Figure \ref{fig:MS_trajectories_toy}, we observe that the proportion of correct completions remains 0 for LLM$_3$. This is because for all trajectories plotted, the output of step 2, $z_2$, has its second component greater than 150 and therefore the final answer is incorrect regardless of which step we begin the completions from.

\begin{figure}
    \centering
    \includegraphics[height=8in]{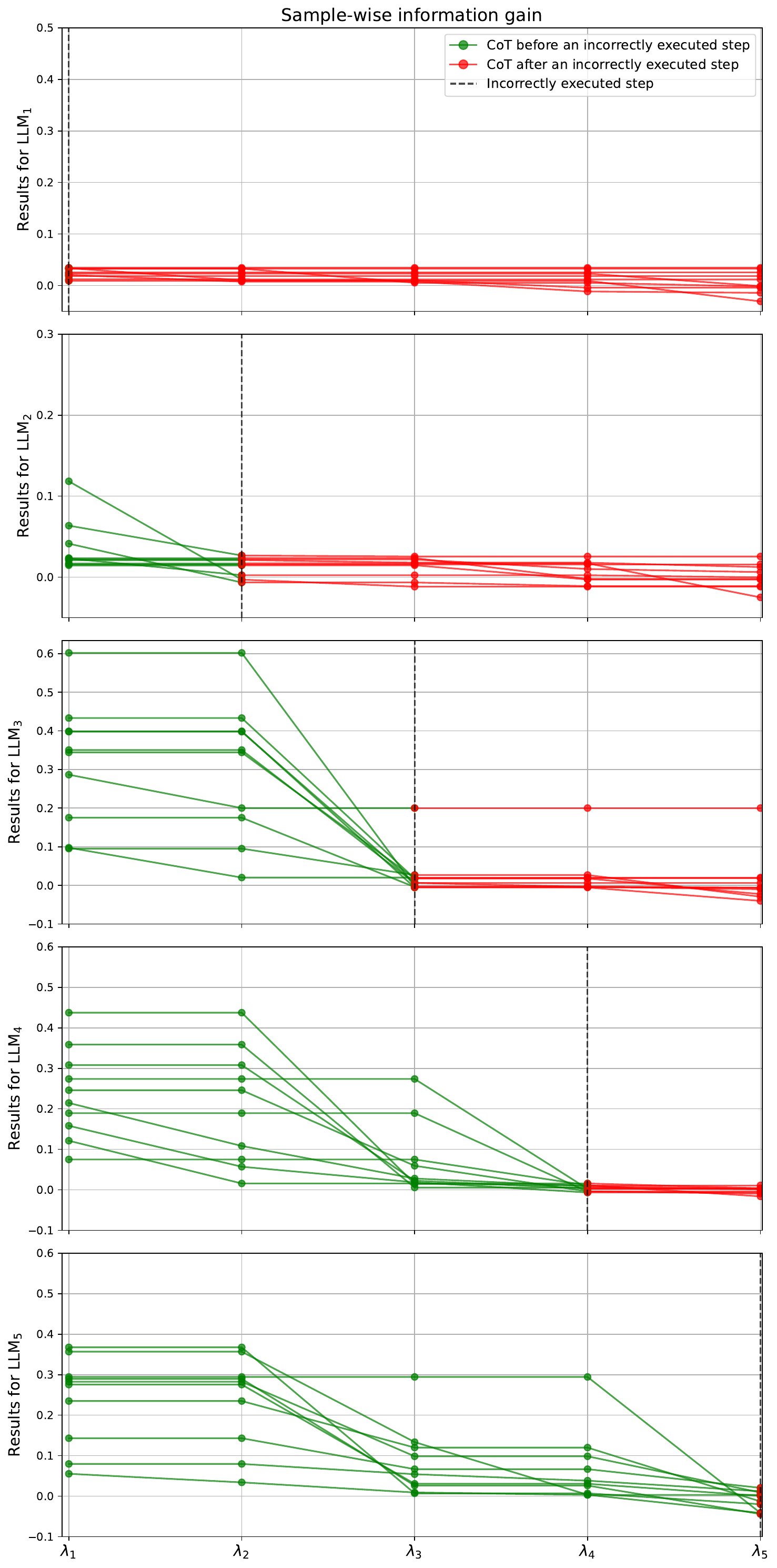}
    \caption{Toy data results: Sample-wise information-gain trajectories for 15 randomly chosen prompts with wrong final answers.}
    \label{fig:ig_trajectories_toy}
\end{figure}

\begin{figure}
    \centering
    \includegraphics[height=8in]{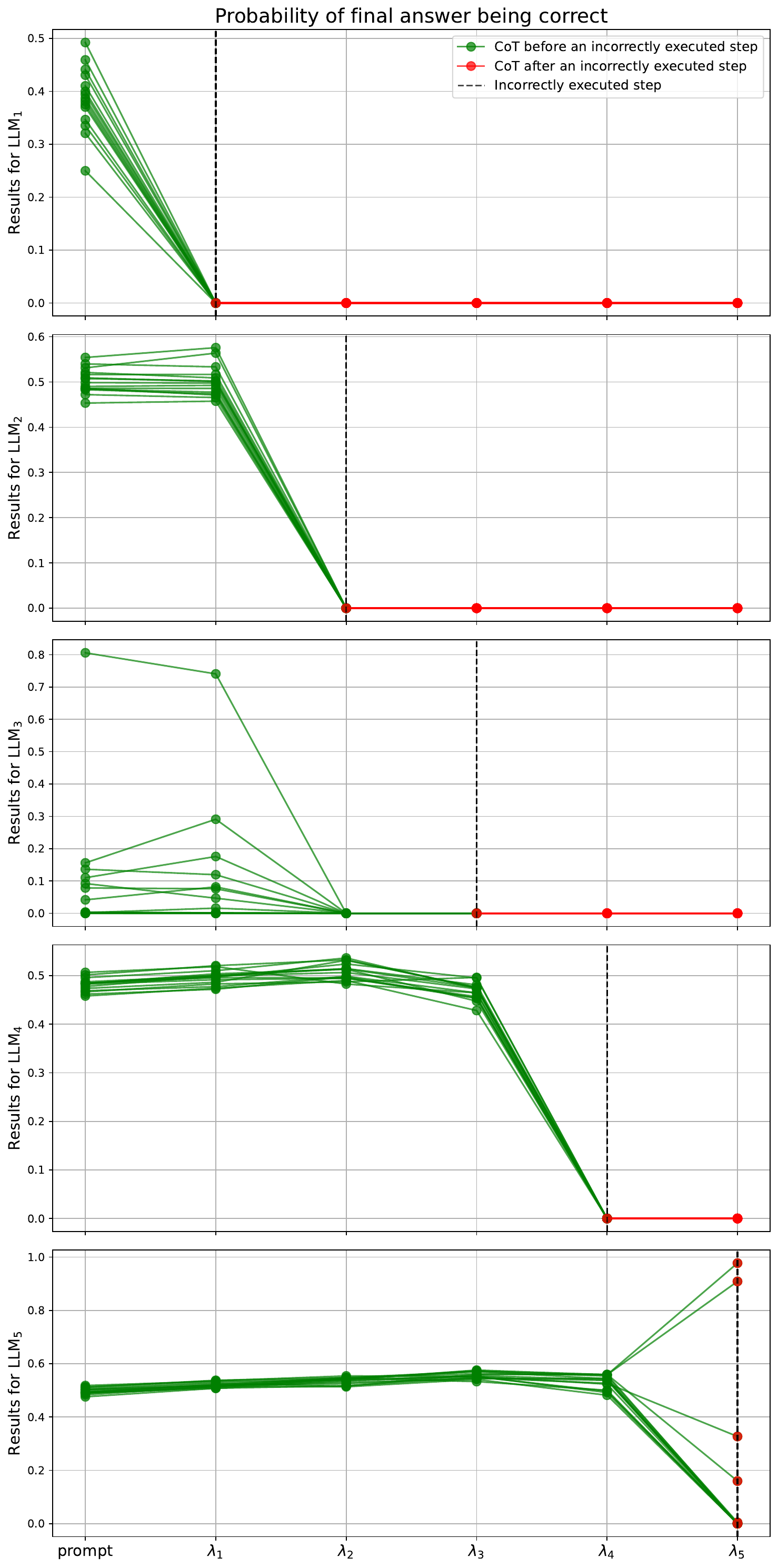}
    \caption{Toy data results: ORM's probability of correctness after each step for 15 randomly chosen prompts with wrong final answers.}
    \label{fig:ORM_trajectories_toy}
\end{figure}

\begin{figure}
    \centering
    \includegraphics[height=8in]{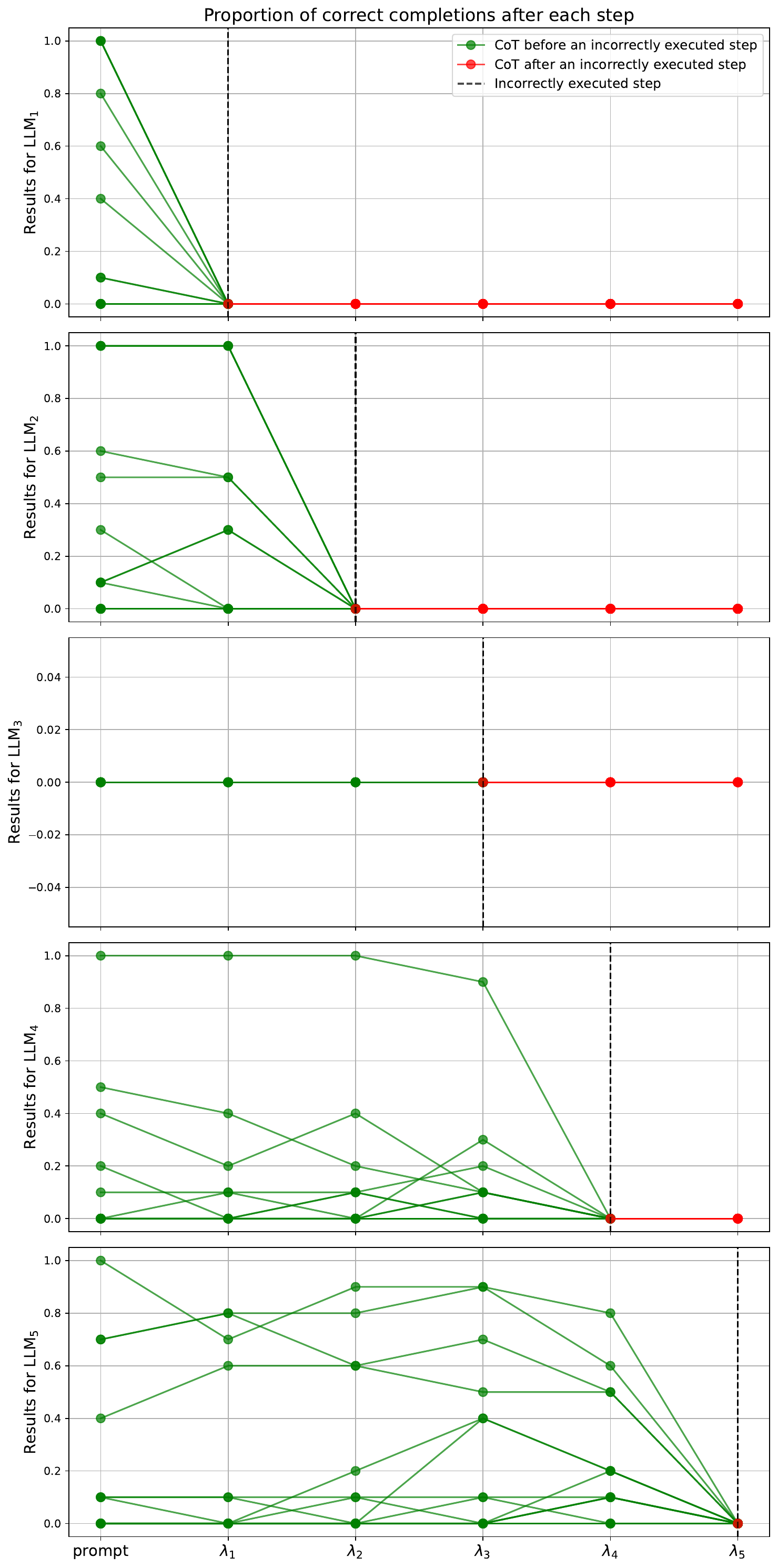}
    \caption{Toy data results: Math-Shepherd's proportion of correct completions from each step for 15 randomly chosen prompts with wrong final answers.}
    \label{fig:MS_trajectories_toy}
\end{figure}

\subsection{Arithmetic Operations on Llama-3-8b}

For this experiment, the prompts used to collect the data follow a specific structure. Each prompt contains two real examples followed by a query with newly sampled values for \(x\) and \(y\). The format of the prompt is as follows:

\begin{verbatim}
x = 23, y = 51. Please calculate the following:
1. 3x
2. 2y
3. 3x + 2y
Answer:
1. 3x = 69
2. 2y = 102
3. 3x + 2y = 171

x = 35, y = 60. Please calculate the following:
1. 3x
2. 2y
3. 3x + 2y
Answer:
1. 3x = 105
2. 2y = 120
3. 3x + 2y = 225

x = {x}, y = {y}. Please calculate the following:
1. 3x
2. 2y
3. 3x + 2y
Answer:
\end{verbatim}

In the third section, the values of \(x\) and \(y\) are randomly sampled from a uniform distribution over the range \([1, 10^5)\). 

\subsubsection{Training Data for the Supervisor Model}

The \textit{supervisor model} plays a crucial role in evaluating the intermediate steps in the Chain-of-Thought (CoT) reasoning. The model is designed to approximate the probability of arriving at the correct final result after any given step in the CoT process. To train this model, we fine-tune it using a dataset composed of generated CoT steps concatenated with the correct final result.

\paragraph{\textbf{Model Generation Example:}}
Consider the following example of a model-generated response:
\begin{verbatim}
x = 51290.0, y = 90718.0. Please calculate the following:
1. 3x
2. 2y
3. 3x + 2y
Answer:  
1. 3x = 153770.0 
2. 2y = 181436.0 
3. 3x + 2y = 335206.0
\end{verbatim}

\paragraph{\textbf{Fine-Tuning Data Construction:}}
The generated outputs are used to construct training examples, where each intermediate step is concatenated with the final correct answer using the separator token \texttt{`\#|>'}. For instance, from the example above, the following four training data points are created:

\begin{enumerate}
    \item \texttt{"x = 51290.0, y = 90718.0. Please calculate the following: 1. 3x 2. 2y 3. 3x + 2y Answer: \#|> 3x + 2y = 335306.0"}
    \item \texttt{"x = 51290.0, y = 90718.0. Please calculate the following: 1. 3x 2. 2y 3. 3x + 2y Answer: || 1. 3x = 153770.0 \#|> 3x + 2y = 335306.0"}
    \item \texttt{"x = 51290.0, y = 90718.0. Please calculate the following: 1. 3x 2. 2y 3. 3x + 2y Answer: || 1. 3x = 153770.0 || 2. 2y = 181436.0 \#|> 3x + 2y = 335306.0"}
    \item \texttt{"x = 51290.0, y = 90718.0. Please calculate the following: 1. 3x 2. 2y 3. 3x + 2y Answer: || 1. 3x = 153770.0 || 2. 2y = 181436.0 || 3. 3x + 2y = 335206.0 \#|> 3x + 2y = 335306.0"}
\end{enumerate}

Each step concatenates the current state of reasoning with the correct final answer. This process enables the supervisor model to learn the relationship between intermediate steps and the correct final outcome. 

Using the dataset generated above, we fine-tune a Llama-3-8b model using Low Rank Adaptation (LoRA) \citep{edward2021lora} as the supervisor model. Finally, the information-gain is computed using the trained model as described in Section \ref{sec:estimating-information-gain}.

\subsubsection{Math Shepherd Results}

The Math-Shepherd approach \citep{wang2024mathshepherdverifyreinforcellms} evaluates how well the model generates intermediate results and completes the reasoning process step-by-step. For a given model generation, we iteratively cut off the chain of reasoning after each step and obtain multiple completions using a completer model (in this case, also the Llama-3-8B model).

Consider the following model generation:
\begin{verbatim}
x = 51290.0, y = 90718.0. Please calculate the following:
1. 3x
2. 2y
3. 3x + 2y
Answer: 1. 3x = 153770.0, 2. 2y = 181436.0, 3. 3x + 2y = 335206.0
\end{verbatim}

In this example, the model completes the full sequence of steps for \( x = 51290.0 \) and \( y = 90718.0 \). To assess the robustness of the Chain-of-Thought (CoT) process, we perform the following procedure for the Math Shepherd results:

\begin{enumerate}
    \item Step-wise Completion: We cut off the generation after each step in the reasoning process. For instance, after computing \( 3x = 153770.0 \), we stop the generation there and generate 10 completions using the Llama-3-8b model.
    \item Multiple Completions: At each cut-off point, the Llama-3-8b model is tasked with completing the remaining steps of the chain of reasoning. For each step, 10 independent completions are generated.
    \item Proportion of Correct Completions: For each cut-off point, we compute the proportion of correct completions. This proportion gives insight into how likely the model is to complete the remaining steps of reasoning correctly, starting from the intermediate point. For example, after cutting off the reasoning at \(3x = 153770.0 \), we evaluate how many of the 10 completions successfully compute \( 3x + 2y = 335306.0 \).
\end{enumerate}

In this way, Math-Shepherd quantifies the model’s ability to continue reasoning correctly at each intermediate stage. 

\subsubsection{Additional results}
Figures \ref{fig:IG_trajectories_math} - \ref{fig:MS_trajectories_math} present the sample-wise trajectories for 15 randomly chosen prompts leading to incorrect final answers for the different baselines.
Here, once again, any significant drop in the plotted value at a given step could be seen as an indication of an incorrectly
executed sub-task. Recall that in this setup majority of the errors occur at the final step which involves the addition of $3x+2y$. 

Figure \ref{fig:IG_trajectories_math} shows the sample-wise information-gain for our method after each step. 
We see that for most of the plotted trajectories, the sample-wise information-gain remains high until the final step, at which point it drops to values close to or below 0.
This shows that our method correctly identifies that the failure predominantly occurs at step 3. 

In contrast, Figure \ref{fig:ORM_trajectories_math} shows that the mean probability of correctness for the ORM remains unchanged at each step. This could be explained by Figure \ref{fig:distribution_map_math} in the main text, which suggests that the ORM classifier can predict the correctness of the final output using only the values of $x$ and $y$ available in the prompt.
Crucially, the classifier's confidence remains unchanged even as the model's intermediate reasoning steps are added to the input.
This means that ORM is unable to distinguish between the model's performance on intermediate reasoning steps.

For Math-Shepherd results shown in Figure \ref{fig:MS_trajectories_math}, most of the trajectories plotted remain constant at 0. In other words, when using Llama-3-8B as the completer model, we observe that for most of the prompts, no completion leads to the correct answer, regardless of which step we begin the completions from. This is likely because, for most of the examples considered in this plot, the $(x, y)$ combination in the prompt has exactly one small value and the other is large (as shown in Figure \ref{fig:distribution_map_math}). This also highlights why Math-Shepherd has a high false positive rate.

\begin{figure}[t]
    \centering
    \includegraphics[width=0.5\linewidth]{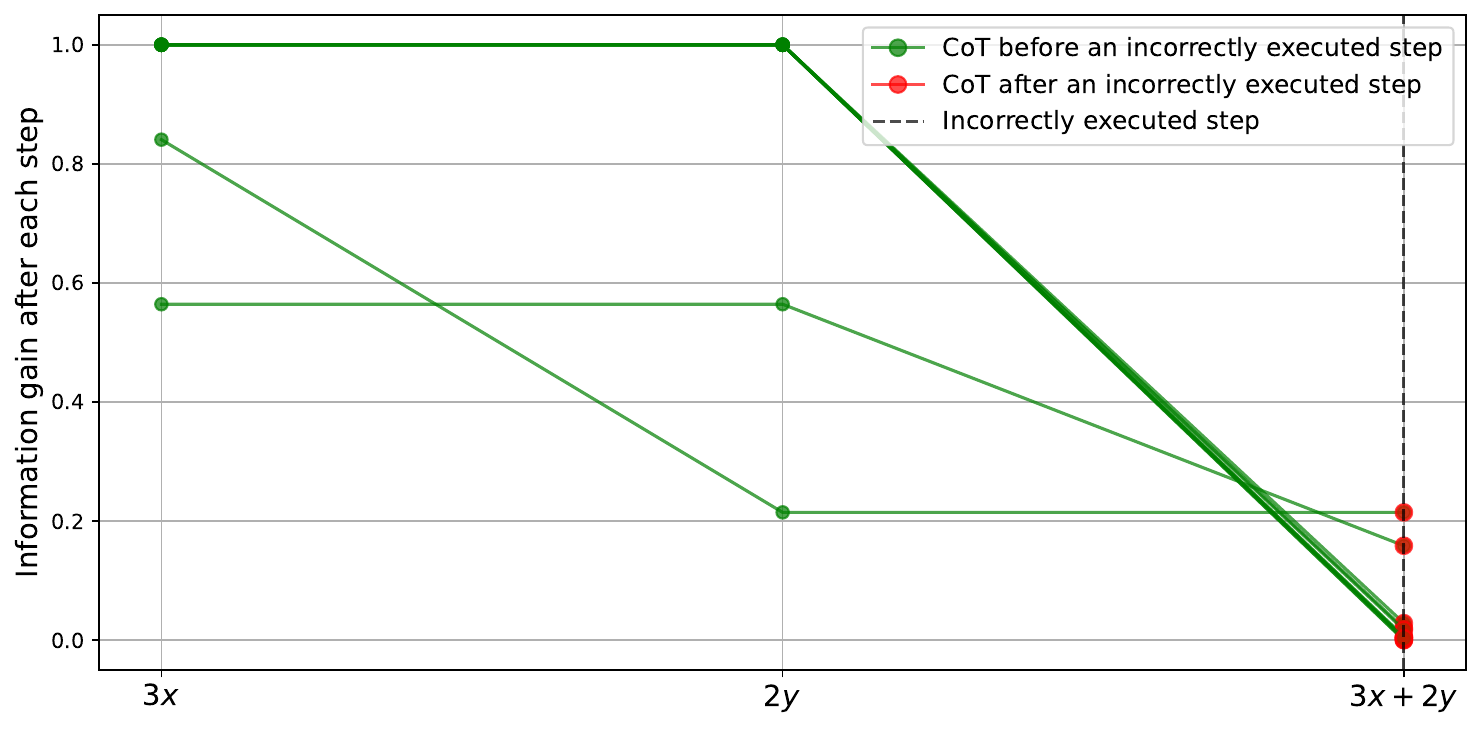}
    \caption{Arithmetic operations on Llama-3-8b: Sample-wise information-gain trajectories for 15 randomly chosen prompts with wrong final answers.}
    \label{fig:IG_trajectories_math}
\end{figure}

\begin{figure}[t]
    \centering
    \includegraphics[width=0.5\linewidth]{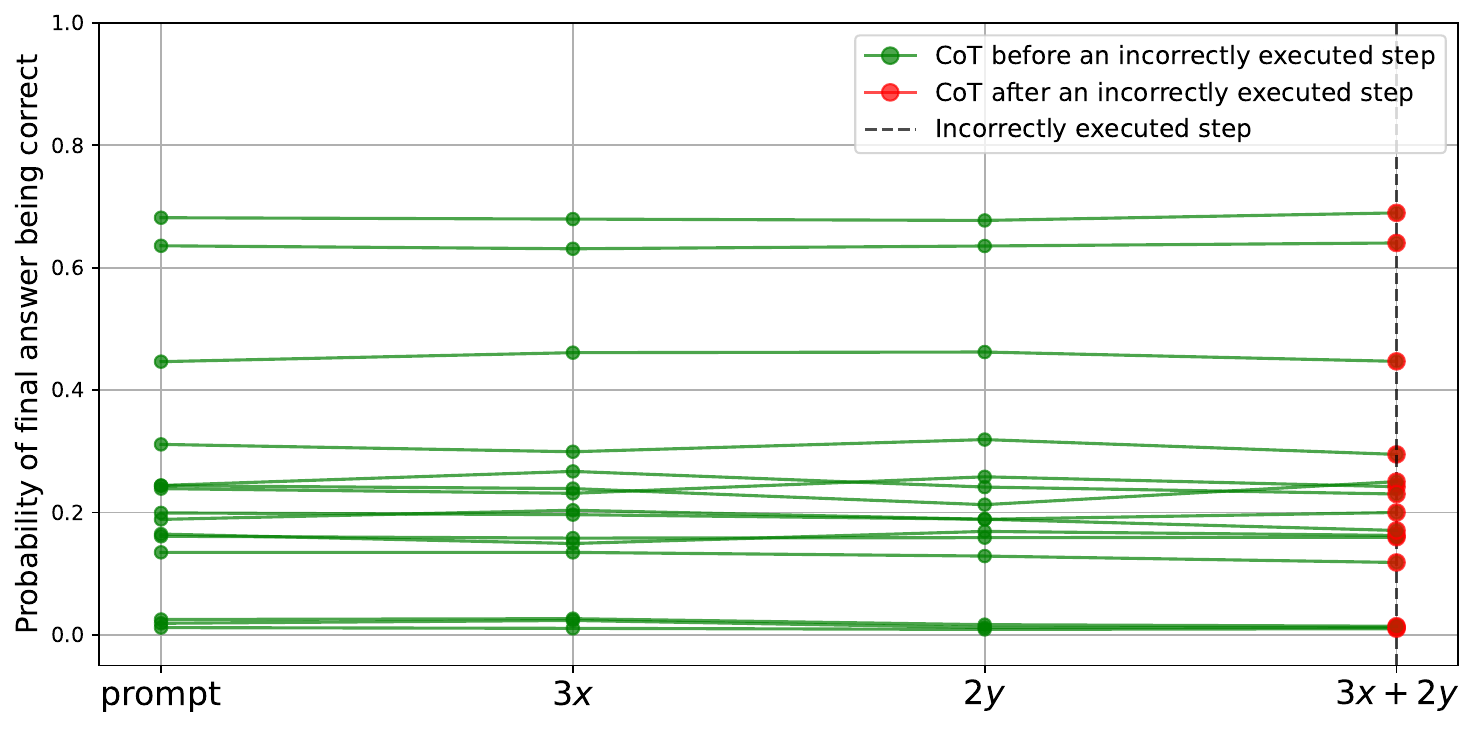}
    \caption{Arithmetic operations on Llama-3-8b: ORM’s probability of correctness after each step for 15 randomly chosen prompts with wrong final answers.}
    \label{fig:ORM_trajectories_math}
\end{figure}

\begin{figure}[t]
    \centering
    \includegraphics[width=0.5\linewidth]{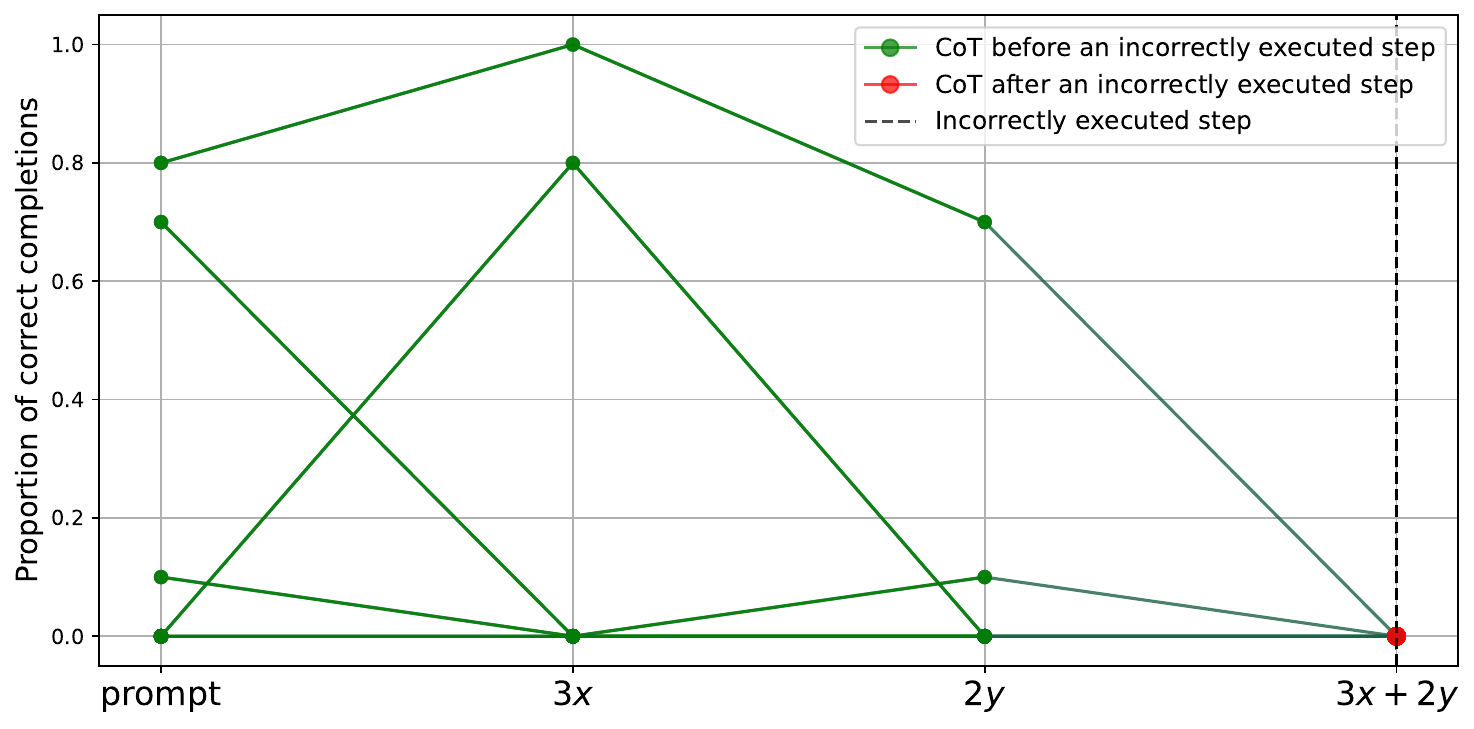}
    \caption{Arithmetic operations on Llama-3-8b: Math-Shepherd’s proportion of correct completions from each step for 15 randomly chosen prompts with wrong final answers.}
    \label{fig:MS_trajectories_math}
\end{figure}

\subsection{Controlled GSM8K Experiments}\label{sec:gsm-8k-appendix}

In order to understand if our proposed method also works on more textual data, we set out to perform an experiment on the popular GSM8K dataset which has more elaborate prompts compared to the previous experiments. To this end, we follow these steps:
\begin{itemize}
    \item We first construct the dataset by using the GPT-4 API on the question. This will give us the basis for correct CoTs.
    \item Next we also again use GPT-4 to label each of the intermediate steps as either using ``Addition", ``Subtraction", ``Division" or ``Multiplication".
    \item With this data in hand, we can now construct our unidentifiable operations. In particular, we again use GPT-4 to rewrite all the CoTs which contain a multiplication into CoTs, where the multiplication is performed wrong and subsequently also the final answer is wrong.
    \item Finally, we filter the final dataset, where we make sure that for every failed CoT, we have both ``multiplication" and ``subtraction" in the CoT, and for every correct CoT, we do not have these operations. This leads us to a setting where standard methods such as ORM would likely fail as there is no way to distinguish between multiplication and subtraction for the classifier.
\end{itemize}

\subsubsection{Examples}

Here below are some examples from the dataset, where we start with the prompt, followed by the CoT generations (with a wrong multiplication).

\textbf{Prompt 1:}

\texttt{Julie is reading a 120-page book. Yesterday, she was able to read 12 pages and today, she read twice as many pages as yesterday. If she wants to read half of the remaining pages tomorrow, how many pages should she read?}

\textbf{Model response 1:}

\texttt{Calculate today's pages read by doubling yesterday's pages: 12 * 2 = 46. Calculate the total pages read so far by adding yesterday's and today's pages: 12 + 46 = 58. Determine the remaining pages by subtracting the pages read so far from the total: 120 - 58 = 62. Calculate the number of pages Julie should read tomorrow by taking half of the remaining pages: 62 / 2 = 31.}

\textbf{Prompt 2:}

\texttt{In a conference room, 40 chairs with a capacity of 2 people each were arranged in rows in preparation for the board meeting of a company, whose number of members was the same as the chairs' capacity. If 2/5 of the chairs were not occupied, and the rest each had two people, calculate the number of board members who did attend the meeting.}

\textbf{Model response 2:}

\texttt{Calculate the total capacity of the conference room: 40 chairs * 2 people per chair = 134 people. Calculate the number of chairs not occupied: 40 chairs * 2/5 = 16 chairs. Calculate the number of chairs occupied: Total number of chairs - Chairs not occupied = 40 chairs - 16 chairs = 24 chairs. Calculate the number of people that attended (each occupied chair has 2 people): 24 chairs * 2 people per chair = 63 people.
}

\textbf{Prompt 3:}

\texttt{Dan owns an ice cream shop and every sixth customer gets a free ice cream cone. Cones cost \$2 each. If he sold \$100 worth of cones, how many free ones did he give away?}

\textbf{Model response 3:}

\texttt{Calculate the number of cones sold that were not free by dividing the total sales by the cost per cone: 100 / 2 = 50. Calculate the total number of cones including free ones by assuming every sixth cone is free, so for every 5 sold, there is 1 free. Total cones = (5 sold + 1 free) * (cones sold / 5) = 6 * (50 / 5) = 6 * 10 = 72. Calculate the number of free cones given away: Total cones - Cones sold = 72 - 50 = 22.}

\subsubsection{Training data for the supervisor model}
For our supervisor model, we simply use a GPT-2 model that we SFT until convergence and use early stopping based on a held out validation dataset.
The training data for this model is composed of generated CoT steps concatenated with the correct final output (as in other experiments).

For example, consider prompt 3 and its response above. For this prompt, the correct final response is 10. Using this prompt, we generate 4 training datapoints for the supervisor model by truncating the response at each step and concatenating the correct final answer using the separator token `\texttt{\#|>}'. 

\begin{enumerate}
    \item \texttt{Dan owns an ice cream shop and every sixth customer gets a free ice cream cone. Cones cost \$2 each. If he sold \$100 worth of cones, how many free ones did he give away? \#|> 10}
    \item \texttt{Dan owns an ice cream shop and every sixth customer gets a free ice cream cone. Cones cost \$2 each. If he sold \$100 worth of cones, how many free ones did he give away? || Calculate the number of cones sold that were not free by dividing the total sales by the cost per cone: 100 / 2 = 50 \#|> 10}
    \item \texttt{Dan owns an ice cream shop and every sixth customer gets a free ice cream cone. Cones cost \$2 each. If he sold \$100 worth of cones, how many free ones did he give away? || Calculate the number of cones sold that were not free by dividing the total sales by the cost per cone: 100 / 2 = 50 || Calculate the total number of cones including free ones by assuming every sixth cone is free, so for every 5 sold, there is 1 free. Total cones = (5 sold + 1 free) * (cones sold / 5) = 6 * (50 / 5) = 6 * 10 = 72 \#|> 10}
    \item \texttt{Dan owns an ice cream shop and every sixth customer gets a free ice cream cone. Cones cost \$2 each. If he sold \$100 worth of cones, how many free ones did he give away? || Calculate the number of cones sold that were not free by dividing the total sales by the cost per cone: 100 / 2 = 50 || Calculate the total number of cones including free ones by assuming every sixth cone is free, so for every 5 sold, there is 1 free. Total cones = (5 sold + 1 free) * (cones sold / 5) = 6 * (50 / 5) = 6 * 10 = 72 || Calculate the number of free cones given away: Total cones - Cones sold = 72 - 50 = 22 \#|> 10}
\end{enumerate}

\subsubsection{Estimating the information-gain}

Our procedure for estimating the information-gain is very similar to that described in Section \ref{sec:estimating-information-gain}. However, in this setup, there is no fixed ordering of tasks for all prompts. For instance, in some prompts, the first step might be addition while in others it might be multiplication. To estimate information-gain for a specific task such as addition, we follow these steps:
\begin{itemize}
    \item We first consider all prompts which contain addition as a sub-task.
    \item Next, for these prompts we estimate the $\E[\log{p(Y\mid \state^M_{T_+})}]$ term, where $T_+$ denotes the step at which addition is executed.
    \item Similarly, we estimate the $\E[\log{p(Y\mid \state^M_{T_{+} - 1})}]$ term, where $T_{+} - 1$ denotes the step immediately preceding addition.
    \item The information-gain for addition is then estimated as the difference between these terms
    $$
    \E[\log{p(Y\mid \state^M_{T_+})}] - \E[\log{p(Y\mid \state^M_{T_{+} - 1})}].
    $$
\end{itemize} 

\subsection{PRM800K Experiments}\label{sec:prm800k-appendix}

To further validate our approach, we conduct experiments on the PRM800K dataset \citep{lightman2023lets}, which provides step-wise correctness labels for problems derived from the MATH dataset. Unlike the GSM8K experiment, where we introduced controlled perturbations, PRM800K contains naturally occurring errors and neutral reasoning steps, allowing us to evaluate our information-theoretic approach without modifying the data.

\subsubsection{Dataset and Experimental Setup}

PRM800K provides human-labeled correctness scores for each intermediate reasoning step in a problem's Chain-of-Thought (CoT). Each step is annotated as:
\begin{itemize}
    \item \textbf{Correct (+1)}: The step correctly follows from prior reasoning and contributes toward solving the problem.
    \item \textbf{Incorrect (-1)}: The step contains an error, leading to an incorrect conclusion.
    \item \textbf{Neutral (0)}: The step neither contributes meaningfully nor detracts from solving the problem.
\end{itemize}

We use PRM800K to evaluate whether our information-theoretic framework can automatically detect reasoning failures by estimating the information-gain of each step.

\subsubsection{Training Data for the Supervisor Model}

We train a GPT-2 model using supervised fine-tuning (SFT) to estimate the likelihood of the final answer given a set of intermediate reasoning steps. The training data consists of problem statements and corresponding CoT steps, with the correct final response appended using the separator token `\texttt{\#|>}'.

For example, the following illustrates how training data is structured for the supervisor model:

\begin{enumerate}
    \item \texttt{How many of the first one hundred positive integers are divisible by 3, 4, and 5? \#|> 1}
    \item \texttt{How many of the first one hundred positive integers are divisible by 3, 4, and 5? || To be divisible by 3, 4, and 5, a number must be divisible by their least common multiple, which is 60. \#|> 1}
    \item \texttt{How many of the first one hundred positive integers are divisible by 3, 4, and 5? || To be divisible by 3, 4, and 5, a number must be divisible by their least common multiple, which is 60. || So, I need to find how many multiples of 60 are in the range from 1 to 100. \#|> 1}
    \item \texttt{How many of the first one hundred positive integers are divisible by 3, 4, and 5? || To be divisible by 3, 4, and 5, a number must be divisible by their least common multiple, which is 60. || So, I need to find how many multiples of 60 are in the range from 1 to 100. || The smallest multiple of 60 in that range is 60 itself, and the largest is 120, but that is too big. \#|> 1}
    \item \texttt{How many of the first one hundred positive integers are divisible by 3, 4, and 5? || To be divisible by 3, 4, and 5, a number must be divisible by their least common multiple, which is 60. || So, I need to find how many multiples of 60 are in the range from 1 to 100. || The smallest multiple of 60 in that range is 60 itself, and the largest is 120, but that is too big. || So, the multiples of 60 in that range are 60 and 120/2 = 60 + 30 = 90. \#|> 1}
\end{enumerate}

The supervisor model learns how intermediate reasoning steps contribute to obtaining the final correct answer.

\subsubsection{Estimating Information-Gain}

Following the procedure in Section \ref{sec:estimating-information-gain}, we estimate the information-gain of each reasoning step. 
For a specific step type \( \lambda_t \), information-gain is computed as:

\begin{equation*}
    \mathbb{E}[\log p(Y \mid \state^M_{t})] - \mathbb{E}[\log p(Y \mid \state^M_{t-1})]
\end{equation*}
where:
\begin{itemize}
    \item \( \state^M_{t} \) is the model’s output at step \( t \),
    \item \( Y \) is the correct final answer,
    \item \( t \) denotes the step where the operation \( \lambda_t \) is applied.
\end{itemize}

\subsubsection{Comparison with ORM and PRM}

Table \ref{tab:method-comparison} provides a qualitative comparison of our method with:
\begin{itemize}
    \item \textbf{Outcome Reward Modeling (ORM)} \citep{cobbe2021trainingverifierssolvemath, lightman2023lets}, which predicts correctness based only on the final answer.
    \item \textbf{Process-based Reward Modeling (PRM)} \citep{lightman2023lets, uesato2022solving}, which learns correctness at each intermediate step using labeled CoTs.
\end{itemize}

\begin{table}[t]
\centering
\begin{tabular}{lccc}
\toprule
Method & Learns per step? & Needs labeled CoTs? & Scalable? \\
\midrule
ORM & \textcolor{red}{\ding{55}} ~No & \textcolor{red}{\ding{55}} ~No & \textcolor{green}{\ding{51}} ~Yes \\
PRM & \textcolor{green}{\ding{51}} ~Yes & \textcolor{green}{\ding{51}} ~Yes & \textcolor{red}{\ding{55}} ~No \\
IG (Ours) & \textcolor{green}{\ding{51}} ~Yes & \textcolor{red}{\ding{55}} ~No & \textcolor{green}{\ding{51}} ~Yes \\
\bottomrule
\end{tabular}
\caption{Comparison of ORM, PRM, and our method based on step-wise learning, labeled CoT dependency, and scalability.}
\label{tab:method-comparison}
\end{table}

Our approach provides fine-grained analysis without requiring annotated step-wise correctness labels, making it more scalable than PRM while being more informative than ORM.

\subsubsection{Key Findings}

Our experiments show:
\begin{itemize}
    \item \textbf{Information-gain aligns with PRM800K correctness labels:} Steps with low information-gain tend to correspond to incorrect reasoning steps.
    \item \textbf{Failure detection without labelled CoTs:} Unlike PRM, our method does not rely on human-annotated CoT labels.
    \item \textbf{Scalability:} Since information-gain is model-estimated, it generalizes across datasets without requiring per-task supervision.
\end{itemize}

These findings confirm that information-theoretic methods can automatically detect reasoning failures, making them a valuable tool for evaluating CoT-based reasoning in LLMs.

\end{document}